\newcommand{\YY}{\vec{Y}}
\newcommand{\yy}{\vec{y}}
\newcommand{\XX}{\vec{X}}
\newcommand{\xx}{\vec{x}}
\newcommand{\ZZ}{\vec{Z}}
\newcommand{\zz}{\vec{z}}
\newcommand{\uu}{\vec{u}}
\newcommand{\<}{\langle}
\renewcommand{\>}{\rangle}
\newcommand{\rimp}{\Rightarrow}
\newcommand{\dimp}{\Leftrightarrow}
\renewcommand{\phi}{\varphi}
\renewcommand{\S}{{\cal S}}
\newcommand{\F}{{\cal F}}
\newcommand{\U}{{\cal U}}
\newcommand{\R}{{\cal R}}
\newcommand{\V}{{\cal V}}
\newcommand{\C}{{\cal C}}
\newcommand{\union}{\cup}
\newcommand{\sat}{\models}
\newcommand{\disc}{{\it disc}}
\newcommand{\LS}{{\cal L}(\S)}
\newcommand{\LSd}{{\cal L}^d(\S)}
\newcommand{\MSc}{{\cal M}_c^{\S}}
\newcommand{\MS}{{\cal M}^{\S}}
\newcommand{\AXp}{\mathit{AX}^+}
\newcommand{\AX}{\mathit{AX}}
\newcommand{\AXpd}{\mathit{AX}^{+,d}}
\newcommand{\commentout}[1]{}
\newcommand{\true}{{\it true}}
\newcommand{\dfn}{\begin{definition}}
\newcommand{\bbox}{\vrule height7pt width4pt depth1pt}
\newcommand{\edfn}{\bbox\end{definition}}
\newcommand{\thm}{\begin{theorem}}
\newcommand{\ethm}{\end{theorem}}
\newtheorem{proposition}{Proposition}
\newtheorem{theorem}{Theorem}
\newcommand{\pro}{\begin{proposition}}
\newcommand{\epro}{\end{proposition}}
\newtheorem{example}{Example}
\newcommand{\cor}{\begin{corollary}}
\newcommand{\ecor}{\end{corollary}}
\newcommand{\xam}{\begin{example}}
\newcommand{\exam}{\end{example}}
\newcommand{\LDL}{{\mathit LDL}}
\newcommand{\VLDL}{{\mathit VLDL}}
\newcommand{\vldl}{{\mathit vldl}}
\newcommand{\TRI}{{\mathit TRI}}
\newcommand{\HDL}{{\mathit HDL}}
\newcommand{\TOT}{{\mathit TOT}}
\newcommand{\ldl}{{\mathit ldl}}
\newcommand{\hdl}{{\mathit hdl}}
\newcommand{\tot}{{\mathit tot}}
\newcommand{\fullv}[1]{#1}
\newcommand{\shortv}{\commentout}
\newcommand{\yyp}{\vec{y}^*}
\title{Causal Models with Constraints}
\author{Sander Beckers \\ 
University of T\"ubingen\\
srekcebrednas@gmail.com
\and
Joseph Y. Halpern\\
Cornell University\\
halpern@cs.cornell.edu
\and
Christopher Hitchcock\\
California Institute of Technology\\
cricky@caltech.edu
}
\date{}
\begin{document}
\maketitle

\begin{abstract}
Causal models have proven extremely useful in offering formal
%joe10
%representations of causal relations between a set of variables. Yet in
representations of causal relationships between a set of variables. Yet in
many situations, there are non-causal  
%joe10
%relations among variables.  For example, we may want variables $\LDL$, $\HDL$, 
relationships among variables.  For example, we may want variables $\LDL$, $\HDL$, 
%joe21
%and $\TOT$ that represent the level of low-density cholesterol, the
and $\TOT$ that represent the level of low-density lipoprotein cholesterol, the
level of lipoprotein
%joe21
%high-density cholesterol, and total cholesterol level, with the relation 
high-density lipoprotein cholesterol, and total cholesterol level,
with the relation  
$\LDL+\HDL=\TOT$.  This cannot be done in standard causal models, because we 
%joe21: !
%cannot simultaneously intervene on all three variables.
can intervene simultaneously on all three variables.
 The goal of this paper is to extend
standard causal models to allow for constraints on settings of variables.
Although the extension is relatively straightforward, to make it useful we have to define
a new intervention operation that \emph{disconnects} a variable from a causal equation.  
%joe17
%In order to interpret the added expressivity of our framework, we show
%when and how a causal model with constraints can be reduced to a
%family of standard causal models. This is helpful for distinguishing
%different types of interventions and for defining actual causality in
%this framework, which turns out to be surprisingly subtle.
We give examples showing the usefulness of this extension, and 
provide a sound and complete axiomatization for causal models with
constraints.
\end{abstract}

\section{Introduction}

\noindent Causal models have proven extremely useful in offering formal
%joe10
%representations of causal relations between a set of variables. Yet in
%many situations we want to study both causal and non-causal relations
representations of causal relationships between a set of variables. Yet in
many situations we want to study both causal and non-causal relationships 
%joe4
%between a single set of variables, and standard causal models are not
%equipped to do so. For instance, many systems allow for descriptions
between a single set of variables; this cannot be done in a standard
causal model.
%joe4: different (more standard) example, and explained the problem
%For instance, many systems allow for descriptions
For example, a standard causal model cannot talk simultaneously 
%sander6: density instead of level
%joe21
%about the level of high-density cholesterol, the level of low-density
%cholesterol, and the level of total cholesterol, although this seems quite
about the level of high-density lipoprotein cholesterol ($\HDL$), the
level of low-density lipoprotein cholesterol ($\LDL$), and the level of
total cholesterol ($\TOT$), although this seems quite 
%sander6: way too many comma's
%natural, especially to handle situations where we do have data
natural. One can imagine a situation where we only have data
regarding 
%only
the level of total cholesterol, even though our causal
model may say that certain health conditions depend on the amount of
%joe6
%low-density cholesterol.  The problem is that standard causal models
$\LDL$.  The problem is that standard causal models
allow simultaneous interventions to all variables in the model.  But
we cannot intervene to simultaneously set 
%joe21
%the amount of low-density
%cholesterol to
$\LDL$ to
%CH $2$, 
$120$ mg/dL,
%CH high-level 
%joe9
%joe21
%the amount of high-density cholesterol to 
$\HDL$ to 
%CH $3$, 
$70$, 
%joe21
%and total cholesterol
and $\TOT$ 
%sander6: to $4$.  This is logically inconsistent!
to 
%CH $4$, 
$180$,
for that is logically inconsistent!
%joe4: going back to your example
%CH2
%The problem arises more generally in systems that allow for descriptions at different levels of abstraction. 
%joe10
%In this example, the variables stand in part-whole relations, rather
%than causal relations.
In this example, the variables have a part-whole relationship, rather
than a causal relationship. 
Other kinds of non-causal constraints giving rise to similar problems include: 
\begin{itemize}
%joe8
  %\item Unit transformations; e.g. variables for weight in pounds and
  \item Unit transformations; for example, having variables that
    describe weight in pounds and 
  weight in kilograms. 
%joe8
  %\item Mathematical relations; e.g. variables for Cartesian
%joe10
%\item Mathematical relations; for example, having variables for both
\item Mathematical relationships; for example, having variables for both
Cartesian
  co-ordinates and polar co-ordinates. 
%joe8
  %\item Microscopic/macroscopic relations; e.g. variables for chemical
%joe10
%\item Microscopic/macroscopic relations; for example, having
\item Microscopic/macroscopic relationships; for example, having
    variables for chemical 
  compositions combined with variables indicating whether a liquid
  sample is water, hydrogen peroxide, or sulphuric acid; or variables
  representing the distribution of molecular velocities in a sample of
  gas, together with variables representing temperature and pressure.  
\end{itemize}
Representing 
%CH2
%this 
any of these 
using standard
%joe21
%causal models would require having separate causal models, one
%for each level of description, thereby ignoring the important
causal models would require having separate causal models
for each 
%sander26: level of
separate
 description, thereby ignoring the important
%non-causal relations between the
non-causal relationships between the 
%CH2
%coarse-grained and fine-grained
variables in the distinct models.

%joe4*: do we allow ambiguious interventions
%sander6: an intervention on \TOT is an ambiguous intervention on \HDL and \LDL, that was the motivating example for Spirtes and Scheines. And note that this paragraph also contains a separate third situation, namely simply restricting the possible settings. So I'm in favor of adding it back. 
%joe8: rewrite,
%Other times we would like to 
%joe6*: How can we set a variable to a range of values?  Although set
%\TOT is ambiguous, in that it has more than one solution (a situation
%that is already possible in a cyclic model), I don't see how we can
%diretly set a variable to a range of values.  If we can't do this
%directly, we shouldn't suggest that we can!
%represent {\em ambiguous} interventions, i.e., interventions where we set
Allowing models with non-causal constraints increases the
expressive power of causal models in important ways.
%joe21: we call them ambiguous in the next sentence
\fullv{For one thing, we can represent {\em ambiguous} interventions.}
%joe8*: I think that this is a bad example, since we can't do this.
%for example, interventions where we set certain variables to a range
%of possible values rather than specific values. 
%CH [cite spirtes and scheines paper: ambiguous manipulations]. 
%joe8: moved
%(The terminology, as well as the cholesterol example, are taken from
%\cite{SS04}.)
%cholesterol separately.  Such an intervention is ambiguous, since it 
% CH corresponds to a number of more detailed interventions.
%joe21
%For example, if we can change only the total level of cholesterol 
%rather than changing the levels of low-density and high-density
%cholesterol separately, then such a change  is ambiguous, because it
For example, if we change only $\TOT$,
rather than changing the levels of $\LDL$ and $\HDL$
separately, then such a change  is ambiguous, because it
can be realized in a number of different ways, corresponding to
different (and perhaps unknown) 
%joe21
%interventions on low-density and high-density cholesterol.
interventions on $\LDL$ and $\HDL$.
%joe8: moved from above
(This terminology, as well as the cholesterol example, are taken from
\cite{SS04}.)
%joe8*: there is a subtle point here.  The syntax does not disallow
%interventions.  There simply won't be any solutions with that intervention
%And sometimes we might simply want to stipulate that certain settings
%of the variables are disallowed, for example if setting total
%cholesterol above some threshold is physically impossible. 
Having constraints also gives us a way of effectively disallowing certain
interventions, by stipulating that certain settings of the variables are
%joe21
%disallowed; for example, setting total cholesterol above some
%threshold might be physically impossible.
disallowed, such as setting $\TOT$ below the sum of
$\LDL$ and $\HDL$.

%sander11: new
Moreover, causal models with constraints have an important practical
application. In many cases, different institutions or researchers
study the same causal domain using non-causally related sets of
%joe10
%variables. These relations can be as trivial as the unit
variables. These relationships can be as trivial as the unit
transformations mentioned above, but can also be far more complicated,
such as the relationship between particular settings and outputs of
fMRI machines produced by different companies, the translation of
terminology used in the financial reporting of different countries, or
more generally, the relationship between datasets that encode
observations of the same kind using different conventions.
%joe10
%Contrary to standard causal models, causal models with constraints
%allow for the integration of the causal knowledge that can be captured
%using each set of variables separately into a single model.  
We cannot combine the causal models used by such groups into one
%Joe10
%(standard) causal model, because of the relations between the
(standard) causal model, because of the relationships between the
variables used in different models.
On the other hand, causal models with constraints
allow for the integration of the causal knowledge of the individual
models into one combined model.

The goal of this paper is to show 
%sander11: 
%that
how
 all of this (and
more) can be accomplished by extending causal models with constraints
 on settings of variables.  
Although the extension is relatively straightforward, to make it useful we 
%sander6:
have to define
a new operation. Specifically, we need to
be able to \emph{disconnect} a variable from a causal equation.  
%sander6: 
%And defining the effects of interventions and actual causality in this
%framework turns out to be surprisingly subtle.
%As it turns out, defining the effects of interventions and actual causality in this framework is surprisingly subtle.
%sander6
%sander11: rewritten
%In order to interpret the added expressivity of our framework, we show when and how a causal model with constraints can be reduced to a family of standard causal models. 
%This is helpful for distinguishing different types of interventions and for defining actual causality in this framework, which turns out to be surprisingly subtle. 
%joe10: I don't know when it means to interpret expressivity
%In order to interpret the added expressivity of our framework, we go
%over some examples to show how it allows to distinguish different
%types of interventions, and we show when and how a causal model with
%sander18: This sentence refers to GOF reductions. Moreover, nothing
%is said about the axiomatization. So this should be rewritten. 
%joe18: rewrote.  I didn't mention the axiomnatization here; it comes
%up later in the overview of the paper.
%We provide examples that illustrate how causal models with constraints
%allow us to distinguish different 
%types of interventions, and show when and how a causal model with
%constraints can be reduced to a family of standard causal
%models.
We provide examples that illustrate how causal models with constraints
can capture many situations of interest.  
%joe17
%Finally, we discuss how to define actual causality in this
%framework, which turns out to be surprisingly subtle.  

We are not the first to suggest moving beyond standard causal models.
%CH2 re-organizing
In many ways, our framework can be seen as formalizing the 
%joe15
%informal suggestions of \cite{Woodward15a}.
informal suggestions of Woodward \cite{Woodward15a}.
%joe4: I'm not sure what you have in mind with this paragraph,
%Sander.  My temptation would be to cut it.  The abstraction papers
%are not an attempt to go beyond standard causal models, in my
%opinion.  Rather, they're a way of relating two different causal
%models.  
%sander6: But this work is entirely in line with abstractions, as mentioned above in the text. Most (everything?) of what we could do using abstractions between two causal models can now be done using a single model. The cholesterol example could just as well described using abstractions. Interpret the abstraction function as constraints and you've got a special case of our framework.
%(which further attests to the importance of this topic). [cite
%  abstraction papers] have recently offered a framework for studying
%causal models at different levels of abstraction. Although their work
%is in the same spirit as ours, it still requires separate causal
%models and does not consider generalizing interventions beyond their
%usual form.
%[cite Blom and mooij] offer an approach that also allows a single
%joe4: I think the focus on a ``single model'' is not useful for the story
%offer an approach that also allows a single
%model to contain non-causally related variables, but their emphasis
%lies on extending causal models with additional {\em causal
%  constraints} rather than non-causal ones. (Concretely, they focus
%sander6: the year doesn't appear when I compile.
%CH2
%In particular, 
%ch Blom, Bongers, and Mooij 
%joe15
%In addition \cite{BBM19}
In addition, Blom, Bongers, and Mooij \cite{BBM19}  
consider \emph{causal constraint models}, which also allow  
non-causally related variables, but their emphasis
lies on extending causal models with additional {\em causal} 
  constraints, rather than the non-causal constraints that we
  consider. (Concretely, they focus 
exclusively on causal representations of 
%sander6: equilibrium was mentioned twice
%equilibrium states for
%joe4
%dynamic systems, and show that standard causal equations are unable to
%capture all of its causal behaviour.) For this reason we refer to our
dynamic systems, and consider the constraints that arise in equilibrium.)
%For this reason we refer to our
%models as {\em causal models with constraints}, as opposed to their
%{\em causal constraints models}. As will become clear, our work
%differs from theirs in several respects and both approaches can be
%seen as complementing each other. 
Our work
%joe21
%differs from theirs in several respects; the approaches can be
differs from theirs in several respects (see Section~\ref{sec:discussion}); the approaches can be
%joe4*: If we're going to say this, we'll need to back it up.
%sander6: I've added a comparison in a footnote, right before the coordinates example.
viewed as complementary.
%joe8
%joe21
%(See Section~\ref{sec:discussion} for further discussion.)
%joe4
%In line with Woodward [cite interventionism and causal exclusion], we
%view non-causal constraints as a general category that subsumes very
%number of different types of constraints, such as logical constraints

%joe8
  %  This paper is structured as follows.
    The rest of this paper is structured as follows. 
   %sander18: actual causality should be removed, as it plays no part
   %in our story anymore. 
%joe18: good point
%The next section reviews the formalism of causal models and a definition of 
%\emph{actual causality} offered in \cite{Hal48}.
The next section reviews the formalism of causal models.
Section~\ref{sec:constraints} introduces our  
new formalism for representing non-causal constraints.
%joe17
%Section~\ref{sec:ac}
%defines the reduction of a causal model with constraints to a
%standard causal model, 
%and introduces a definition of actual causality for causal models with
%constraints.
In Section~\ref{sec:axioms}, we provide a sound and complete
axiomatization for causal models with constraints, in the spirit of
that provided by Halpern~\cite{Hal20} for causal models.
%joe8
%Our conclusion describes some directions for future research.
We conclude with some discussion in Section~\ref{sec:discussion}.

%joe6*: step 1 in restructuring
%\section{Causal Models with Constraints}
%sander20: this title is no longer accurate.
%\section{Actual Causality in Causal Models: A Review}\label{sec:review}
\section{Causal Models}\label{sec:review}

Before getting to the new definitions, we review the standard
definition of a causal model \cite{Hal20,Hal48}
%sander2: added
%joe4
% (with a slight modification, see below).
%joe6
%(with a slight modification; see below).
%sander20
%(with a slight modification; see below), and the definition of actual cause.
(with a slight modification; see below).
A \emph{causal model} $M$
is a pair $(\S,\F)$, where $\S$ is a
\emph{signature}, which explicitly 
lists the endogenous and exogenous variables  and characterizes
their possible values, and $\F$ defines a set of \emph{structural
  equations},\index{structural equations} relating the values of the
variables.  Formally, a signature $\S$\index{signature} is a tuple
$(\U,\V,\R)$, where $\U$ is a set of 
exogenous variables,\index{exogenous variable} $\V$ is a set 
of endogenous variables,\index{endogenous variable} and $\R$
associates with every variable $Y \in  
\U \union \V$ a nonempty set $\R(Y)$ of possible values for $Y$ 
(i.e., the set of values over which $Y$ {\em ranges}).  

%sander2: added footnote.
%sander2: no longer true
%The function $\F$ associates with each endogenous variable $X \in \V$ a
%joe19*: As I said in my email, I'd like to rewrite the next two
%paragraph somewhat.
%sander23: I find the terminology somewhat confusing. Before we say
%that \F is a set of equations, here we say it's a function, and
%likewise for F_X below: first it's a function, later we refer to it
%as an equation. It would be cleaner to say that F_X represents an
%equation of the form X= f_X(...), so that the function itself is f_X.  
%joe23: technical, both \F and F_X are functions.  \F maps variables
%to euqation so \F(X) = F_X.  But why not just avoid the problem by
%removing hte word function in the next line
%For some endogenous variables $X \in \V$, the function $\F$ associates a
For some endogenous variables $X \in \V$, $\F$ associates a
%joe3: I agree with your comment below, and have rewritten the
%material below to take this into account
%joe19*: reverted to original version, but if you're unhappy with
%this, we can change it back to the version we that I'm about to
%comment out. 
function denoted $F_X$ such that $F_X$ maps
%joe19
%$\times_{Z \in (\U \union \V - \{X\})} \R(Z)$  to $\R(X)$.
$\R(\U \union \V - \{X\})$  to $\R(X)$ (where, if $\vec{Y}$ is a set
of variables, we take $\R(\vec{Y})$ to be an abbreviation for
$\times_{Y \in \vec{Y}} \R(Y)$);
%  variable}\footnote{Note that $\F$ need not associate a function
%  $F_X$ with \emph{every} variable $X \in \V$. This is a departure
% from standard causal models \cite{Hal20,Hal48}.}
%(Recall that $\U \union \V - \{X\}$ is the set consisting of all the
%variables in either $\U$ or $\V$ that are not in $\{X\}$.  The
%notation $\times_{Z \in (\U \union \V - \{X\})} \R(Z)$ denotes the
%cross-product of $\R(Z)$, where $Z$ ranges over the variables in 
%$\U \union \V - \{X\}$; thus, if $\U \union \V - \{X\} = 
%\{Z_1,\ldots, Z_k\}$, then  $\times_{Z \in (\U \union
%\V - \{X\})} \R(Z)$ consists of tuples of the form $(z_1,
%\ldots, z_k)$, where $z_i$ is a
%possible value of $Z_i$, for $i=1,\ldots, k$.)
%joe19: As I said, I'm cutt this version for now.
%function denoted $F_X$ such that, for some 
%$\W \subseteq \U \union \V
%\setminus \{X\}$, $F_X$ maps $\times_{Z \in \W} \R(Z)$  to $\R(X)$.
%CH2 editing for length
%Recall that $\U \union \V - \{X\}$ is the set consisting of all the
%variables in either $\U$ or $\V$ that are not in $\{X\}$;
that is, $F_X$
%joe19
%takes as input the values of some variables in $\U \union \V$ other
takes as input the values of the variables in $\U \union \V$ other
than $X$,
%joe17
%and returns a value in the range of $X$.
and returns a value in the range of $X$.
%joe19 moved up from below
Note that we have departed from standard causal
models \cite{Hal20,Hal48} by not 
requiring $\F$ to associate a function $F_X$ with
\emph{every} variable $X \in \V$, only some of them. This turns out
%joe21: Chris, you asked if this was critical.  I guess that depends
%how you define "critical".  if the question is "is it the case that
%we can replace disc(X) by a different intervention", the answer is
%no unless we allow nondeterministic interventions (X is
%nondeterministically set to some value in its range).  I don't think
%we want to get into that here though, so I would just leave it as is.
to be critical when we add constraints.

%sander11: re-instated
%\commentout{
%joe19: cut, given my change above.  I think that the notation is useful
%\footnote{Recall that the 
%notation $\times_{Z \in \W} \R(Z)$ denotes the
%crossproduct of $\R(Z)$, where $Z$ ranges over the variables in 
%$\W$; thus, if $\W = \{Z_1,\ldots, Z_k\}$, then  $\times_{Z \in \W} \R(Z)$
%consists of tuples of the form $(z_1, \ldots, z_k)$, where $z_i$ is a
%possible value of $Z_i$, for $i=1,\ldots, k$.}
%}
%joe19
%This mathematical notation just makes precise the fact that
%$F_X$ determines the value of $X$,
%given the values of all the other variables in $\U \union \V$.  We
%given the values of some other variables in $\U \union \V$.
%We typically simplify notation and write, for example, $X = Y + Z$
%joe3
%instead of $F_X(Y,Z,U) = Y+Z$, omitting the variable $U$ from the
%equation if the value of $U$ does not affect the value of $X$.
%In this case, $X$ depends only on $Y$ and $Z$, not $U$.
%instead of $F_X(Y,Z) = Y+Z$.

%joe19*:
If the value $F_X$ depends only on the variables in some subset
%sander20
%$\vec{W} \subseteq \V -\{X\}$, we often write $F_X(\vec{w}) = x$ or
$\vec{W} \subseteq \U \union \V -\{X\}$, we often write $F_X(\vec{w}) = x$ or
$X = \F_X(\vec{W})$.  For example, if we have an exogenous variable
$U$ and endogenous variables $X_1, \ldots, X_5$, and $X_3$ is the sum
of $X_1$ and $X_2$, we write $X_3 = X_2  + X_1$, omitting $X_4$,
$X_5$, and $U$.  
%sander20
%Formally, if $\vec{Y} = \V - \{X\} - \vec{W}$, then $F_X(\vec{w}) = x$
Formally, if $\vec{Y} =  \U \union \V - \{X\} - \vec{W}$, then $F_X(\vec{w}) = x$
is an abbrevation of 
%sander20
%$F_X(\vec{x},\vec{y}) = x$ for all $\vec{y} \in \vec{Y}$.  
$F_X(\vec{w},\vec{y}) = x$ for all $\vec{y} \in \vec{Y}$.  
While this shorthand is quite common, as we will see below it is particularly useful in
the presence of constraints.
\commentout{
While this shorthand is quite common, it is particularly useful in
the presence of constraints.
Consider a model where $D$ is diet, $W_P$ is
%\{X\}$ rather than all of them.  Both of these changes will make it
%easier to deal with constraints.  
%To see why,
%consider a model where $D$ is diet, $W_P$ is
weight in  
pounds, and $W_K$ is weight in kilograms.
%Suppose we want a causal 
%equation for $W_P$ in terms of $D$.
It seems strange to think of
$F_{W_P}$ as a function from the values of both $D$ and $W_K$; 
the value of $D$ would never make a difference, given the
%joe19
%value of $W_K$.  While it would work mathematically to write the
value of $W_K$.  While it works mathematically to write the
function so that 
%joe19
%$W_P$ is independent of $W_K$, it is hard to
$W_P$ is independent of $W_K$ (and, formally, that is what we do), it
is hard to 
interpret this; it seems much more natural just to think of $W_P$ as a
function of $D$.  
%sander3: this is a far more substantial point. We need to discuss whether or not it is unnecessary, for it might definitely have important consequences. For example, now it seems as if W_K is a derived variable, and W_P is not. This asymmetry requires justification.
Moreover, it is unnecessary to write an
additional causal equation for $W_K$; it is far more natural for
$W_K$ to be determined by the logical constraint that relates $W_P$
and $W_K$.
%joe19
This is exactly why we do not require $F_X$ to be defined for all
variables $X$.
%}
}%end

Much of the
work on causality has focused on \emph{recursive} or \emph{acyclic}
models, where there are no dependency cycles between variables, and the
values of all endogenous variables are ultimately determined by the
%joe2: I don't know why I didn't view a context as a function, just as
%I know viewing a state.  I think it makes it easier.
%\emph{context}, that is, the setting of the exogenous variables.
\emph{context}, that is, an assignment of values to the exogenous variables.
As we shall see, once we allow constraints, even in acyclic models,
the values of the endogenous variables may not be determined
%joe9:
by the context; we also need a \emph{state}, that is, an assignment
of values to the endogenous variables.
%sander23: we need a term for this later.
A context-state pair is called an {\em extended state}.
%joe2: there's really no reason
%But, for simplicity, we deal with acyclic models for now.
%joe9: line shaving
%Indeed, our approach for dealing with constraints generalizes the way
%Halpern \citeyear{Hal48} deals with cyclic models, so we thus allow
%cyclic models right from the start.
Our approach for dealing with constraints generalizes the way
%joe15: globally replace \citeyear by \cite
%Halpern \citeyear{Hal48} deals with cyclic models, so we allow
Halpern \cite{Hal48} deals with cyclic models, so we allow 
cyclic models from the start.
Given a signature $\S$, let $\MS$
%(resp., $\MSa$)
denote all causal models
%(resp., acyclic causal models)
of the form $(\S,\F)$, where $\F$ can be arbitrary.

%joe6
%We are interested in a language for reasoning about causality.  The
It is useful to have a language for reasoning about causality.  The
language that has been used in earlier papers is defined as follows:
Given a signature $\S = (\U,\V,\R)$, a \emph{primitive
  event}\index{primitive event} is a
formula of the form $X = x$, for  $X \in \V$ and $x \in \R(X)$. 
%sander15: in order to introduce <>, the syntax needs to be expanded 
%A {\em causal formula (over $\S$)\/}\index{causal formula} is one of the form
%joe15: I have no problem with what you wrote, but Boolean
%combinations have been allowed all along 
A {\em basic causal formula (over $\S$)\/}\index{causal formula} is
one of the form 
$[Y_1 \gets y_1, \ldots, Y_k \gets y_k] \phi$,
where \begin{itemize} 
\item $\phi$ is a Boolean
combination of primitive events,\index{primitive event}
\item $Y_1, \ldots, Y_k$ are distinct variables in $\V$, and
%joe21
%\item $y_i \in \R(Y_i)$.
\item $y_i \in \R(Y_i)$, for $i = 1, \ldots, k$.
\end{itemize}
Such a formula is abbreviated
as $[\vec{Y} \gets \vec{y}]\phi$, using the vector notation.
The special case where $k=0$
%joe2: not any more
%is abbreviated as $[\,] \phi$. or, more often, just $\phi$.
%joe15
%is abbreviated as $[\,] \phi$.
is abbreviated as $[\,] \phi$.%
%joe6
%\footnote{If we consider acyclic models and just causal equations, 
%  then we can identify $[\, ]\phi$ and the formula $\phi$, but in our
%joe22
%\footnote{In acyclic models,
\footnote{In standard acyclic models (where there is no disconnection
and an equation for each endogenous variable), 
  we can identify $[\, ]\phi$ and the formula $\phi$, but in our
  setting, we cannot do so.}
  %chris3: It's not clear to me why this is so. As long as we are
%clear to distinguish
  %v \sat \phi and (M, u) \sat \phi, can't we treat them as the same?
%joe3: My point was that we want to write (M,u) |= []\phi and v |= \phi.
%Now I suppose we could say that when we write (M,u) |= \phi we mean
%$(M,u) |= []\phi, but I don't see a good reason to do that.  Indeed,
%I  would prefer to be careful and write []\phi, to remind the reader
%of the meaning and to allow us to distinguish []\phi from <>\phi
%(which I take to be an abbreviation for \neg [] \neg \phi).
%joe17*: added.  Without this, the axiomatization in my earlier paper
%is not complete; I would need an extra axiom
We assume for simplicity that the variables in $\V$ are ordered, and,
no matter in what order the variables appear in an intervention, the
resulting formula is syntactic sugar for the formula where the
variables appear in order.  For example if
%joe21
%$Y_1$ is earlier in the order than $Y_2$, then we view
%$[Y_2 \gets y_2, Y_1 \gets y_1]\phi$ as syntactic sugar for
$Y_1$ is earlier in the order than $Y_2$, then 
$[Y_2 \gets y_2, Y_1 \gets y_1]\phi$ is syntactic sugar for
$[Y_1 \gets y_1, Y_2 \gets y_2]\phi$.
%sander18: shouldn't we also mention Briggs' paper?
%joe18*: I had never looked to carefully at Briggs paper.  Now that
%I've looked at it, I find it confusing.  She does seem to deal with
%the permutations with her axioms CM10.  On the other hand, although
%many of her axioms deal ony with the case that all the variables in
%the intervention are distinct, unless I'm missing something, she
%allows interventions where the variables aren't distinct, and I very
%much doubt that she deals with them the way we would.  Perhaps we can
%sidestep mentioning her because the models that she uses aren't
%causal models.  That said, we should add Galles and Pearl
%(This assumption is made implicitly in \cite{Hal20} and \cite{HP21},
%the two papers that we are aware of that provide axiomatizations for
(This assumption is made implicitly in \cite{GallesPearl98,Hal20,HP21},
the papers that we are aware of that provide axiomatizations for
%joe21
%causal models; without it, the
causal models. Without it, the
axiomatizations they provide would not be complete: we would
need an axiom that allows us to rearrange  the order of interventions.)
Intuitively,
\mbox{$[Y_1 \gets y_1, \ldots, Y_k \gets y_k] \phi$} says that
$\phi$ would hold if
$Y_i$ were set to $y_i$, for $i = 1,\ldots,k$.
%joe15: moved from below
A {\em causal formula} is a Boolean combination of basic causal formulas.
For $\S = (\U,\V,\R)$, 
%joe2
%let $\L$ consist of all Boolean combinations of causal
let $\LS$ consist of
all
%joe15
%Boolean combinations of
%%sander15: 
%basic
%joe21
%causal formulas,\index{causal formula}
causal formulas
where the variables in the formulas are taken from $\V$ and 
%joe6
%sets of possible values of these variables are determined by $\R$.
their possible values are determined by $\R$.
%sander15:
%joe15: moved up
%A {\em causal formula} is a Boolean combination of basic causal formulas.

%joe2
%A causal formula $\psi$ is true or false in an acyclic causal model, given a
%context.  We write $(M,\vec{u}) \sat \psi$  if
A causal formula $\psi$ is true or false in a causal model, given 
%sander23:
%a
an
%joe9*: I added state.  In retrospect, if we view both u and v as
%functions, I would be happy to get rid of the \vec in both cases.
%For now I've kept them.  I had viewed \vec{u} as a tuple (U_1=u_1,
%U_2=u_2,...)$, which is why I had the vector notation.
%Should we just do that?
%context.  We write $(M,u) \sat \psi$  if
%sander23:
%context and a state.  
extended state.
We write $(M,\vec{u},\vec{v}) \sat \psi$  if
the causal formula $\psi$ is true in
%joe2: no need for \vec{u} any more
%causal model $M$ given context $\vec{u}$.
%joe9
causal model $M$ given 
%sander23:
%context $\vec{u}$ and state $\vec{v}$.
extended state $(\vec{u},\vec{v})$.
%joe17:
The $\sat$ relation is defined inductively
%joe6
(see \cite{HP01b,Hal48}).
%joe2: nontrivial changes
%$(M,\vec{u}) \sat X = x$ if the variable $X$ has value $x$
%in the unique (since we are dealing with acyclic models) solution
%to the equations in $M$ in context $\vec{u}$
%(that is, the unique vector of values for the  
%%sander1:exogenous
%endogenous
% variables that simultaneously satisfies all
%equations in $M$ 
%with the variables in $\U$ set to $\vec{u}$).
%joe6:
%As a first step, given a state $v$ and a Boolean combination of
%primitive events, we define $v \sat \phi$ by taking
%joe9: cut this from here (it's said above and below)
%In acyclic causal models, the context and the equations in $\F$
%together determine the value of all
%variables.  
%%sander8: do you mean non-recursive?
%%joe7: indeed I did!
%%This is not the case in general recursive models.
%This is not the case in general non-recursive models.
%Take a \emph{state} $\vec{v}$ to be an assignment  of values to
%all of the
%endogenous variables. 
%state primitive events.  
%joe9*: changing this
%If $\phi$ is a a Boolean combination of
%primitive events, we define $v \sat \phi$ by taking
%$v \sat X=x$ iff $v(X) =x$ and extending to Boolean combinations in
%the obvious way.
$(M,\vec{u},\vec{v}) \sat X=x$ if $(\vec{u},\vec{v})$ satisfies all
%joe9: note that if we view v as a function (without the arrow), then
%we can write v(X) = x$.
the equations in $\F$ and $X=x$ in state $\vec{v}$.
We extend $\sat$ to conjunctions and negations in the standard way.
%The truth of conjunctions and negations is defined in the standard way.
%Finally, 
%joe2
%$(M,\vec{u}) \sat [\vec{Y} \gets \vec{y}]\phi$ if
%$(M_{\vec{Y} \gets \vec{y}},\vec{u}) \sat \phi$, where $M_{\vec{Y}
% \gets \vec{y}}$ is identical to $M$ except that the equation for $Y_i$ is
%joe6
%joe9
%Then
Finally,
%joe7
%$(M,u) \sat [\vec{Y} \gets \vec{y}]\phi$ if $v \sat \phi$ for all
%joe9
%$(M,\vec{u}) \sat [\vec{Y} \gets \vec{y}]\phi$ if $\vec{v} \sat \phi$ for all 
%joe21: it's actually pretty standard in this context to write "if"
%although  clearly  "iff" is intended. To be safe, I'll make it iff.
%Chris, you also asked if we needed to write M_{Y \gets y} here rather
%than M.  It turns out not to matter, since the truth of \phi depends
%only on v.  But, for consistency with other papers, it's probability
%better to write $M_{Y \gets y}
%$(M,\vec{u},\vec{v}) \sat [\vec{Y} \gets \vec{y}]\phi$ if
%$(M,\vec{u},\vec{v})
$(M,\vec{u},\vec{v}) \sat
[\vec{Y} \gets \vec{y}]\phi$ iff 
%joe
$(M_{\vec{Y} \gets \vec{y}},\vec{u},\vec{v}')
\sat \phi$ for all  
%joe17
%states $\vec{v'}$ such that $(\vec{u},\vec{v'})$ satisfies all the equations in
states $\vec{v}'$ such that $(\vec{u},\vec{v}')$ satisfies all the
equations in 
%joe6: I reverted back to the earlier version.  I found it hard to
%parse with the two consecutive $\F_{\vec{Y} \gets \vec{y}}$'s
%$\F_{\vec{Y} \gets \vec{y}}$.
 %CH
 %joe21
% $\F_{\vec{Y} \gets \vec{y}}$
 $\F_{\vec{Y} \gets \vec{y}}$, where  $M_{\vec{Y}\gets \vec{y}} =
 (\S,\F_{\vec{Y} \gets \vec{y}})$,   and $\F_{\vec{Y} \gets \vec{y}}$
 is identical to $\F$, except that 
 %CH
 for each $Y_i$ in $\vec{Y}$ and corresponding $y_i$ in $\vec{y}$,
 the
 %chris3
 causal
equation for $Y_i$ is 
%joe1
%replaced by $Y_i = y_i$, and the remaining equations are unchanged.
replaced by $Y_i = y_i$ (or $Y_i = y_i$ is added if there was no
%joe2
%equation for $Y_i$ in $\F$), and the remaining equations are unchanged.
equation for $Y_i$ in $\F$).  
%sander1: And what if we intervene on a variable that doesn't have a
%causal equation? I take it then a causal equation is added.
%joe1: yes; see above
%joe9: added
We write $(M,\vec{u}) \sat \psi$ if the truth of $\psi$ depends only
on the context $\vec{u}$, which is easily seen to be the case for 
formulas of the form $[\vec{Y} \gets   \vec{y}]\phi$
%joe17
and write $\vec{v} \sat \psi$ if $\psi$ is a Boolean combination of
primitive events that is true in state $\vec{v}$ (note that the truth
of Boolean combinations of primitive events is completely determined
by the state).

Some comments:
\begin{itemize}
%joe3
%\item In an acyclic causal model 
%%chris3
  %without constraints
  %  which includes an equation for each
%  which includes an equation for each endogenous variable,
%joe6
  %\item In an acyclic constraint-free causal model,
%joe13: we've used standard in a few places already, and defined it in
%the footnote
%\item In an acyclic causal model,
\item In a standard acyclic causal model, 
%joe7
    %    there is a unique $v$ such that $(u,v)$ satisfies
        there is a unique $\vec{v}$ such that $(\vec{u},\vec{v})$ satisfies
the equations in $\F$.
%joe9: took out the footnote
%\footnote{
That is why, in the standard semantics for causal formulas
  in acyclic causal models, there is no mention of the 
  %CH2 added $v4
%joe9
  %  state $v$;
%    cf. \cite{Hal48}.}
  state $\vec{v}$;
  cf. \cite{Hal48}.
In cyclic causal models there may be more than
%joe7
%one $v$ such that $(u,v)$ satisfies the equations in $\F$, or none.
%joe21
%one $\vec{v}$ such that $(\vec{u},\vec{v})$ satisfies the equations in
one such $\vec{v}$ such that $(\vec{u},\vec{v})$ satisfies the equations in
$\F$, or none. 
Once we drop the requirement that there is an equation for each
%sander2: causal model,
endogenous variable,
%joe7
%there may again be more than one $v$, even in acyclic
%joe21
%there may again be more than one $\vec{v}$, even in acyclic
there may again be more than one such $\vec{v}$, even in acyclic
models. 
%chris3
%joe6: cut from here
%Moreover, once we add constraints (even to an acyclic 
%causal model with one causal equation for each variable),
%there may be no $v$ such that $(u, v)$ satisfies the equations in $\F$.
%joe2
%\item Since whether $(M,\vec{u},\vec{v}) \sat  [\vec{X} \gets
%    \vec{x}]\phi$ is independent of $\vec{v}$, we typically write 
%    $(M,\vec{u}) \sat  [\vec{X} \gets \vec{x}]\phi$.
  \item It is easy to check that this definition is equivalent to the
%joe2
%    standard definition in acyclic causal models where there are no
    standard definition of $\sat$ in acyclic causal models.
%joe6
%    where there are no     constraint equations.
%joe2: already said this
%    In that case, there is a unique state
%    $\vec{v}'$ such that 
%    $(\vec{u},\vec{v}')$ satisfies the constraints in $\F$ (and there
%    is no $\C$).  
    %  \item This definition
%joe6: no need; said it above
%    It   is also equivalent to the definition given in
%    \cite{Hal48} for cyclic models.
%joe2: seems unnecessary
    %\item There may not be any states $\vec{v}'$ such that
%  $(\vec{u},\vec{v}')$ satisfies all the causal and constraint
%  equations in a model $M$.
  \item If we define $\langle \vec{X} \gets \vec{x} \rangle \phi$ as
    an abbreviation of 
    %sander1:  I take it you meant this:
    %joe1: indeed I did; sorry for the confusion
    %$\neg [\vec{X} \gets \vec{x}] \phi$, then 
    $\neg [\vec{X} \gets \vec{x}] \neg \phi$, then 
%joe2
%   $(M,\vec{u}) \sat \langle \vec{X} \gets \vec{x} \rangle \true$ iff
%joe9
%    $(M,u) \sat \langle \vec{X} \gets \vec{x} \rangle \true$ iff
    $(M,\vec{u}) \sat \langle \vec{X} \gets \vec{x} \rangle \true$ iff
    %joe2
    %    there is some state $\vec{v}'$ such that $(\vec{u},\vec{v}')$
        %sander2: reformulated to fit terminology used above
		%there is some state $v$ such that $(u,v)$
    %satisfies all the causal and constraint equations in $M_{\vec{X} \gets \vec{x}}$. 
%joe6
%    there is some state $v \in \C$ such that $(u,v)$ satisfies all the
%joe7
    %    there is some state $v$ such that $(u,v)$ satisfies all the
        there is some state $\vec{v}$ such that $(\vec{u},\vec{v})$
        satisfies all the 
    causal equations in $\F_{\vec{X} \gets \vec{x}}$. 
    %sander14: added
In this case we say that $\vec{v}$ is a {\em solution} of
%joe17
%$(M_{[\vec{X} \gets \vec{x}]},\vec{u})$, meaning that 
%$(M,\vec{u}) \sat \langle \vec{X} \gets \vec{x} \rangle \vec{V}=\vec{v}$. 
$(M_{[\vec{X} \gets \vec{x}]},\vec{u})$, meaning that (with the
obvious abuse of notation)
$(M,\vec{u}) \sat \langle \vec{X} \gets \vec{x} \rangle \V=\vec{v}$. 
\end{itemize}

%joe6: new section, that largely incorporates material that we had before
\section{Causal Models With Constraints}\label{sec:constraints}
We now extend causal models by allowing constraints.
Some of the constraints we are interested in are defined by equations,
such as $\TOT= \HDL + \LDL$.  But we also want to allow constraints 
such as (1)  $X \le Y$, (2)
$X-Y \in \mathbf{S}$ (where $\mathbf{S}$ is a set
of values), and (3) 
%sander8: 
%joe0
%$X$ and $Y$ are either both positive or both negative, and
$X$ and $Y$ are either both positive or both negative.
%joe7: why did you add the quotation marks?  I don't think that
%they're necessary.  I reverted back.
%``$X$ and $Y$ are either both positive or both negative'', and so on.
%Semantically, we identify a collection of constraints with a 
%sander10: what's going on here? Should this be commented?
%joe9: yes
%set of allowed states.  %Thus, we take 
%collection of equations, where an equation such as $X = Y + Z$
%constrains the set of states to those where $v(X) = v(Y) + v(Z)$.  But
%our formalization also allows for constraints 
%chris3
%of the form 
%such as $X \le Y$,  
%and more general constraints that cannot be captured easily by
%equalities and inequalities 
%$X-Y \in \mathbf{S}$ (where $\mathbf{S}$ is a set
%of values), $X$ and $Y$ are either both positive or both negative,
Thus, we take a \emph{causal model with constraints} to be a triple $(\S,
\F, \C)$, where, as before, $\S$ is a signature and $\F$ is a
collection of equations, and $\C$ is a set of 
%sander23:
extended
states (intuitively, the
%sander23
extended
states that satisfy the constraints).
In the special case where $\C$ contains all possible 
%sander23:
extended
states (i.e., where
%sander23:
%$\C = \times_{Z \in \V} \R(Z)$, so $\C$ places no constraints)
$\C = \times_{Z \in \U \cup \V} \R(Z)$, so $\C$ places no constraints)
and $\F$ associates an equation with each variable in $\V$,
the causal model with constraints
$(\S, \F, \C)$ is equivalent to the 
%sander6: simple
standard
 causal model $(\S, \F)$.
%joe3
%In such a case, we will say that the model is without constraints.
%sander6: 
%In such a case, we will say that the model is \emph{constraint-free}.
%joe17
%We call such models {\em Good Old-Fashioned (GOF) causal models}.
%joe17: added
Given a signature $\S$, let $\MSc$
%(resp., $\MSca$) 
consist of all causal models
%(resp., acyclic causal models)
%joe21
%with constraints of the form $(\S,\F)$ where $\S$ is fixed and $\F$
with constraints of the form $(\S,\F,\C)$, where $\S$ is fixed and $\F$
and $\C$ are arbitrary.
%sander24: 
%and let $\MSCc$ denote all causal models
%(resp., acyclic causal models)
%joe21
%with constraints of the form $(\S,\F,\C)$ where $\F$ and $\C$ are
%sander24:
%with constraints of the form $(\S,\F,\C)$, where $\S$ and $\C$ are
%fixed and  $\F$  is arbitrary. 

%joe17
%We give semantics to formulas in our language just as before, except
%sander18:
%We give semantics to formulas in $\LSd$ just as before, except
We give semantics to formulas in $\LS$ just as before, except
that we take $\C$ into account.  Specifically,
%joe9
%$(M,u) \sat [\vec{Y} \gets \vec{y}]\phi$ if $v \sat \phi$ for
%joe21
%$(M,\vec{u}) \sat [\vec{Y} \gets \vec{y}]\phi$ if
$(M,\vec{u}) \sat [\vec{Y} \gets \vec{y}]\phi$ iff 
%sander12: v \sat \phi$ for
%joe21
%$(M,\vec{u}, \vec{v}) \sat \phi$ for
$(M_{\vec{Y}\gets \vec{y}},\vec{u}, \vec{v}) \sat \phi$ for
%sander8:
%joe7
%each
all
%joe7
%$v \in \C$ such that $(u,v)$ satisfies all the causal equations in
%sander23: this looks a bit weird though, because \vec{u} is not
%intended to be within the scope of the quantifier. 
%$\vec{v} \in \C$ such that $(\vec{u},\vec{v})$ satisfies all the
%joe23: dealt with the problem you pointed out
%$(\vec{u},\vec{v}) \in \C$ that satisfy all the
states $\vec{v}$ such that $(\vec{u},\vec{v}) \in \C$ and
$(\vec{u},\vec{v})$  satisfies all the
causal equations in 
   $\F_{\vec{Y} \gets \vec{y}}$.   

%sander20: I moved the earlier discussion to here and changed it. If you prefer the previous story, feel free to change it back, but this looks cleaner to me.
Note that, crucially, the causal equations only matter for 
%sander23l:
%states $\vec{v}$ that satisfy the constraints. 
extended states $(\vec{u},\vec{v})$ that satisfy the constraints. 
This explains why we often
need not write an equation $F_X$ as depending on all other variables,
and why not all endogenous variables require a causal
equation. Consider a model where $D$ is diet, $W_P$ is weight in
pounds, $W_K$ is weight in kilograms, and the constraints $\C$
implement the obvious logical constraint that relates $W_P$ and $W_K$
(meaning that $W_P$ and $W_K$ fully determine each other). It does not
%joe21
%matter whether we write $F_{W_P}$ as a function from the values of
%both $D$ and $W_K$ or as a function only of $D$, because for any two
%states $(w_P,w_K,d)$ and $(w_P,w'_K,d)$ we have that
%$(w_P,w_K,d) \in \C$ and $w_P=F_{W_P}(w_K,d)$ iff
%$(w_P,w_K',d) \in \C$ and $w_P=F_{W_P}(w_K',d)$. Moreover, it is
matter whether we write $F_{W_P}$ as a function of the values of
both $D$ and $W_K$ or as a function only of $D$, since for all
%sander23:
extended
states $(w_P,w_K,d)$ and $(w_P,w'_K,d)$, if both
$(w_P,w_K,d)$ and $(w_P,w'_K,d)$ are in $\C$, then 
$w_P=F_{W_P}(w_K,d)$ iff $w_P=F_{W_P}(w_K',d)$. Moreover, it is
unnecessary to write an 
additional causal equation for $W_K$; it is far more natural for
$W_K$ to be determined by the logical constraint that relates $W_P$
and $W_K$. 
%sander20: end discussion

%joe17
%We find it useful to extend the language a little further, so as to be
%joe21
%We find it useful to extend the language $\LS$ a little further, so as to be
%able to 
We find it useful to extend the language $\LS$ a little further, to
allow us to
%CH2
%decouple 
disconnect
%joe21: there is no M in the language
%some variables $\vec{X}$ from their causal equations in $M$, so that
some variables $\vec{X}$ from their causal equations, so that
the values of the variables in $\vec{X}$ are  
%joe21
%only determined by the constraints. Specifically,
determined only by the constraints. Specifically,
we allow formulas of the form $[\disc(\vec{X}), \vec{Y} \gets
  \vec{y}]\phi$, where $\vec{X}$ and $\vec{Y}$ 
 % CH2
  % are disjoint. (Either of $\vec{X}$ or $\vec{Y}$ may be empty.)
  are disjoint, and either of $\vec{X}$ or $\vec{Y}$ may be empty.
%CH this sentence seems redundant
% Intuitively,  this means that the variables in $\vec{X}$ are disconnected from their causal equations.  
  %Formally, 
  %define 
%joe9
%  $(M,u) \sat  [\disc(\vec{X}),\vec{Y} \gets \vec{y}]\phi \mbox{ iff    }
%    (M_{-\vec{X}},u) \sat [\vec{Y} \gets \vec{y}]\phi$, 
  $(M,\vec{u)} \sat  [\disc(\vec{X}),\vec{Y} \gets \vec{y}]\phi \mbox{ iff }
    (M_{-\vec{X}},\vec{u}) \sat [\vec{Y} \gets \vec{y}]\phi$, 
    where $M_{-\vec{X}}$ is the
%joe21
%model that is just like $M$, except
model \fullv{that is} just like $M$, except 
	that all causal equations for
    variables in $\vec{X}$ are removed from $\F$.%
%joe6*: corrected, and made a footnote
%joe21: we can't check it, because we haven't defined it; line shaving
%\footnote{We remark that the requirement that $\vec{X}$ and $\vec{Y}$ are
%disjoint is without loss of expressive power.  For if we allow
%$\vec{X}$ and $\vec{Y}$ to have nonempty intersection, 
%then it is easy to check that 
\footnote{Requiring that $\vec{X}$ and $\vec{Y}$ be
disjoint does not lose expressive power. If 
$\vec{X}$ and $\vec{Y}$ were not disjoint,
we would want
%joe9
%$(M,u) \sat [\disc(\vec{X}),\vec{Y} \gets \vec{y}]\phi$ iff
%        $(M,u) \sat 
$(M,\vec{u} \sat [\disc(\vec{X}),\vec{Y} \gets \vec{y}]\phi$ iff
        $(M,\vec{u}) \sat 
        %CH2 changed - to \
        [\disc(\vec{X} - \vec{Y}),\vec{Y} \gets \vec{y}]\phi$.}
%joe17
%joe21
%Let $\LSd$ the the language that extends $\LS$ by allowing disconnection.
Let $\LSd$ be the language that extends $\LS$ by allowing disconnection.

%sander23: I think this is a good place to have some conceptual
%discussion, in light of the comments of the reviewer.
%joe23: your points don't really dsicuss how causal models with
%constraints differ from standard models, so I rewrote this slightly.
%Before presenting some examples of what models with constraints allow
Causal models with constraints, as the name suggests, extend causal
models by adding constraints on possible solutions to the structural
equations.  While, at some level, this is a straightforward extension,
as the examples we present below show, it actually adds
%joe25
%quite a bit of useful expressive power, letting us capture
significant expressive power, letting us capture
realistic situations that 
cannot be 
%sander24:
%easily
 captured in standard causal models.  The extension
also brings out some subtle issues regarding the relationship between
exogenous and endogenous variables and how the value of an endogenous
%joe25
%variables is determined which we briefly discuss here.
variable is determined that we briefly discuss here.
%one to do, we discuss how they differ from standard causal models on a
%us to do, we discuss how they differ from standard causal models on a
%more conceptual level.  
\begin{itemize}
%joe23
%\item Note that, in some respects, an endogenous variable for which
%there is no equation behaves similarly to an exogenous variable: they
%are both not determined by the structural equations, and they can both
\item In some respects, an endogenous variable for which
there is no equation behaves similarly to an exogenous variable: neither
is determined by the structural equations, and they can both
%joe23
%be restricted by the constraints. Nonetheless, in other respects, both
%types of variables behave quite differently: the value of an exogenous
be restricted by the constraints. However, in other respects, 
they behave quite differently: the value of an exogenous
variable is assumed to be simply given, as it's determined by factors
that are not part of our model, whereas the value of an endogenous
variable that does not have an equation is either free to take on any
value that is allowed by the constraints, or is set to some value by
means of an intervention.  
%joe23
%\item One could imagine generalizing the manner in which the values of
%endogenous variables are determined even further than we do. Instead
\item We could further generalize the way that the values of
endogenous variables are determined. Instead
of having to choose between an endogenous variable $X$ being uniquely
determined by its equation or not being determined by an equation at
%joe23
%all, we could allow for an equation $F_X$ such that $F_X$ maps
%$\R(\U \union \V - \{X\})$ to $\cal{P}(\R(X))$. ([cite Joe and
%Spencer's paper] go even further and abandon equations altogether,
all, we could have an equation $F_X$ such that $F_X$ maps
$\R(\U \union \V - \{X\})$ to 
%sander26: the X comes out weird, but I don't know how to change it...
%joe25: I brute-forced a fix, but it really shouldn't have been
%necessary.  I don't know what went wrong.
%$\cal{P}(\R(X))$. (Peters and
$\cal{P}(\R(\mbox{$X$}))$. (Peters and
Halpern 
%sander26: I don't have this reference in my bib file. Can you send me
%the bib joe?
%joe25: it's in the latest version of joe.bib, which I'll send.
%\citeyear{PH21} go even further and abandon equations altogether,
\cite{PH21} go even further and abandon equations altogether,
taking a causal model to simply be a mapping from context-intervention
%joe23
%pairs to states.) Although we believe this generalization is sensible,
%we believe the current framework is already sufficiently expressive to
pairs to states.) Although we believe that this is a sensible generalization,
we believe that the current framework is already sufficiently expressive to
merit a discussion of its own. 
%joe23: I'm happy to say more, but I'm not sure what you had in mind.
%Perhaps the sentences I added above took care of it?
%sander24: I wasn't sure either... 
%\item TODO: say something more general about the intuition behind our
%framework. 
\end{itemize}

%joe6: added example. Temperature is easier than weight becasuse TF =
%1.8TC + 32.  
\xam\label{xam:temperature}
%CH2 re-wrote slightly
Suppose that two different researchers 
%CH2 
study the effect of temperature on heat stroke in vulnerable populations. 
%construct causal models, but
%one group 
One
%joe21
%expresses temperature in Celsius (and thus uses a variable $TC$
%that represents temperature in Celsius) while 
expresses temperature in Celsius (and uses a variable $TC$
to represent temperature in Celsius) while 
%CH2
%another 
the other
uses a
%joe21
%variable $TF$ that represents temperature in Fahrenheit.  We can then
variable $TF$ to represent temperature in Fahrenheit.  We can 
combine 
%CH2
%the two models, adding the 
%joe8
%their two models into a single causal model $M$, which will include the
their two models into a single causal model $M$ that includes the
constraint $TF = 1.8TC + 32$ (which
means that $\C$ consists of all those 
%sander23:
extended
states where the equation
%joe16
%holds).  For simplicty, suppose that the value of $TC$ is determined
holds).  For simplicity, suppose that the value of $TC$ is determined
by an exogenous variable $U$
%CH2 
 %For definiteness, suppose that 
 according to the causal equation
 $TC = U$. 
%CH2 
% the
%value of $TF$ is determined by the 
%sander8: 
%equation
%constraint,
%joe8
 % There is no causal equation for $TF$ (whose value will be determined
  There is no causal equation for $TF$ (whose value is determined 
by the constraint). 
There is one other
variable $HS$ (the patient will suffer heatstroke), 
%CH2
%where 
%joe21
%with a causal equation specifying that
with the causal equation 
$HS = 1$ if
$TC \ge 40$, and $HS = 0$ otherwise.  Consider the context $u$ where $U=35$, so
that $TC = 35$, $TF = 95$, and $HS=0$.  Clearly we have 
%sander8: such
%sander10: changing to angle notation.
 %that $(M,u) \sat \neg [TC \gets 40](HS=0)$;
%joe9: technically, since there is only one exogenous variable, we can
%write u rather than \vec{u} given my clunky notation, but let's write
%\vec{u} to avoid confusing the reader
%that $(M,u) \sat \langle TC \gets 40 \rangle (HS=1)$;
that $(M,\vec{u}) \sat \langle TC \gets 40 \rangle (HS=1)$;
  if we set $TC$ to 40,
there is a unique solution to the equations, and in that solution $HS=1$.
On the other hand, we do not have
%sander10:
%$(M,u) \sat \neg [TF \gets 104](HS=0)$.
%joe9
%$(M,u) \sat \langle TF \gets 104 \rangle (HS=1)$.
$(M,\vec{u}) \sat \langle TF \gets 104 \rangle (HS=1)$.
 If we set $TF$ to 104 degrees in context
$u$, then $TC$ remains at 35 degrees (since the value of TC is
determined by the context $u$, which has not changed).  The resulting
state is not in $\C$; there are no solutions to the equations in $\C$
where $TC = 35$ and $TF = 104$.  Thus, $[TF \gets 104](HS=0)$ is
vacuously true in all these solutions; that is, 
%sander8: I take it the negation should not be there?
%joe7: actually, I do want the negation! HS=1 after you set TF to
%104.  Is there something I can write to make the point clearer?
%sander9: So then I take it it should be HS=1?
%CH2 I agree with Sander that the negation does not belong. I am changing, but double-check
%joe8: sorry; this was a typo
%$(M,u) \sat \neg [TF \gets 104](HS=0)$.  
%joe9: 
%$(M,u) \sat [TF \gets 104](HS=0)$.
$(M,\vec{u}) \sat [TF \gets 104](HS=0)$.
On the other hand, we have
%sander10:
%$(M,u) \sat \neg [\disc(TC), TF \gets 104](HS=0)$.
%joe9
%$(M,u) \sat \langle \disc(TC), TF \gets 104 \rangle (HS=1)$.
$(M,\vec{u}) \sat \langle \disc(TC), TF \gets 104 \rangle (HS=1)$.
  Once we disconnect
the equation for $TC$, there is a (unique) solution to the equations
where $TF = 104$; in that solution, $TC = 40$ (because of the
constraint) and $HS=1$.  The key point here is that we need to
disconnect $TC$ to get the desired effect of intervening on $TF$.
%sander8: since this intervention is of importance to asses actual
%causation, perhaps that could be pointed out here. It would also help
%the reader if the connection to AC2 is made explicit when going over
%the formulas.
%joe7: added
%joe17
%Note that it is precisely formulas of the form 
%$\langle TF \gets 104 \rangle (HS=1)$
% that are needed to show that $TF = 95$ is a cause of
%$HS=0$ (see AC2).  We return to this point in Section~\ref{sec:ac},
%when we discuss causality
%in causal models with constraints.  
\exam

Now consider a formalization of the cholesterol example.
%CH2 I cut the reference, which made it sound like the formalization
%came from Spirtes and Scheines. We already mention above that the
%cholesterol example comes from them. 
%, essentially taken from \cite{SS04}.
\xam \label{cholesterol} 

%CH16 I cut out the original version, and put in an edited version, rather than editing line by line. We can go back to the simpler version more easily this way.

\commentout{
Consider a model $M$ that represents the impact of cholesterol
on atherosclerosis in a particular patient. While it is normal for physicians to report
total cholesterol level, total cholesterol includes two different kinds 
of cholesterol: high density lipoproteins (HDL cholesterol) and 
low density lipoproteins (LDL cholesterol).\footnote{Total cholesterol 
also includes very low density lipoproteins (VLDL cholesterol). In practice,
VLDL is usually estimated as a fraction of total triglycerides. We
%joe4
%will ignore
ignore  
these details to simplify the exposition.} LDL cholesterol is harmful, contributing 
%joe4
%to the build-up of plaque in arteries.
to the buildup of plaque in arteries.
By contrast, HDL cholesterol is beneficial, since it helps to clear LDL cholesterol out of the arteries. 
%joe4
%Our model will include the following
The model has the following
endogenous variables:
\begin{itemize}
  \item $AS$ -- atherosclerosis, level of plaque build-up in arteries
  \item $\HDL$ -- level of HDL cholesterol
  \item $\LDL$ -- level of LDL cholesterol
  \item $\TOT$ -- total cholesterol
    %joe4
    %  \item $D$ -- dietary factors that affect cholesterol
      \item $D$ -- dietary factors that affect cholesterol.
\end{itemize}
%joe4
%Our causal equations will be:
There is one exogenous variable, $U$.
The causal equations are
 \begin{itemize}
  \item $D = F_D(U)$
  \item $\HDL = F_{\HDL}(D)$
  \item $\LDL = F_{\LDL}(D)$
  \item $AS = F_{AS}(\HDL, \LDL)$
%joe4
    %  \item $TOT = F_{\HDL}(D) + F_{\LDL}(D)$
%joe6*: removed for now
%  \item $\TOT = F_{\HDL}(D) + F_{\LDL}(D)$.
\end{itemize}
%joe4
% $U$ is the exogenous variable, so that $D$ is determined by the
 %context. We will not specify precise equations, but will assume that
$D$ is determined by the exogenous variable (i.e., the context).
We do not specify the precise equation, but assume that
$F_{AS}$ is a decreasing function of $\HDL$ and increasing in $\LDL$.
%joe4
%but not identical to $\HDL + LDL$. The constraint $\C$ will include
%all of the states in which $\TOT = \HDL + LDL$.
%sander6: doesn't make sense
%not identical to $\HDL + LDL$. 
The constraints $\C$ consists of 
all the states where $\TOT = \HDL + \LDL$. 

%joe6: added this discussion (which incorporate ambiguity)
In this model, we can freely intervene on $\HDL$ and $\LDL$; the value
of $\TOT$ will change in the appropriate way, so as to maintain the
constraint.  Of course, if we intervene to set  $\HDL = \hdl$, $\LDL =
\ldl$, and $\TOT = \tot$ 
simultaneously, then unless the intervention is such that $\tot = \ldl +
\hdl$, there will be no 
%sander8:
%models satisfying the equations, so all
states satisfying the constraints, so all
formulas of the form $[\LDL \gets \ldl, \HDL \gets \hdl, \TOT \gets
  \tot]\phi$ will be vacuously true.  Indeed, in a context $\vec{u}$
where $\LDL = \ldl^*$, $\HDL = \hdl^*$, and $\TOT = \tot^*$, an intervention
that sets $\TOT$ to $\tot' > \tot^*$ will also lead to an
inconsistency, unless we disconnect the equation for either $\LDL$ or
$\HDL$ (or both).

%joe6*: here is where I talk about ambiguity.  Do we need to say
%anything else?
%sander8: I suggest at least mentioning the term ambiguous
%intervention when giving the last case. 
Note that if we disconnect the equation for $\LDL$
(but not the equation for $\HDL$),
there will be a unique solution to the equations, where $\HDL = \hdl^*$,
$\TOT = \tot'$, and $\LDL = \tot' - \hdl^*$.  That is, intervening on $\TOT$
while disconnecting $\LDL$ results in the value of $\HDL$ remaining
%joe8
%fixed, while $\LDL$ changes to maintain the constraint.  Similarly if
fixed, while $\LDL$ changes to maintain the constraint.  Similarly, if
we disconnect $\HDL$ but not $\LDL$.  If we disconnect both $\LDL$ and
$\HDL$ while setting $\TOT = \tot'$, then there will be multiple
solutions to the equations: $\HDL$ and $\LDL$ can take arbitrary values
that add up to $\tot'$.
%joe7: added
This makes $(\disc(\LDL,\HDL), \TOT=\tot')$ what Spirtes and Scheines
\cite{SS04} call an \emph{ambiguous intervention}.
}
%CH16 new version starts here
Consider a model $M$ that represents the impact of cholesterol
on atherosclerosis in a particular patient. While it is normal for physicians to report
total cholesterol level, total cholesterol includes three different kinds 
%joe21
%of cholesterol: high density lipoproteins (HDL cholesterol), 
%low density lipoproteins (LDL cholesterol), and very low density
of cholesterol: HDL cholesterol, (LDL cholesterol, and very low-density
lipoproteins (VLDL cholesterol). LDL cholesterol is harmful,contributing  
%joe4
%to the build-up of plaque in arteries.
to the buildup of plaque in arteries.
By contrast, HDL cholesterol is beneficial, since it helps to clear LDL cholesterol out of the arteries. 
VLDL cholesterol has little direct impact on the arteries, but it contributes to levels of triglycerides, 
which are harmful.
In practice, it is very difficult to directly measure LDL and VLDL cholesterol. Instead, VLDL cholesterol is 
inferred from observed triglyceride levels, and this inferred value is used together with
measured values of HDL and total cholesterol to estimate the value of LDL cholesterol. 
For this reason, it may be useful to be able to include all of these variables together in the same
causal model. 
%joe4
%Our model will include the following
The model has the following
endogenous variables:
\begin{itemize}
  \item $AS$ -- atherosclerosis, level of plaque build-up in arteries
  \item $\HDL$ -- level of HDL cholesterol
  \item $\LDL$ -- level of LDL cholesterol
  \item $\VLDL$ -- level of VLDL cholesterol
  \item $\TOT$ -- total cholesterol level
  \item $\TRI$ -- level of triglycerides
    %joe4
    %  \item $D$ -- dietary factors that affect cholesterol
      \item $D$ -- dietary factors that affect cholesterol.
\end{itemize}
%joe4
%Our causal equations will be:
There is one exogenous variable, $U$.
The causal equations are
%The \fullv{causal} equations are
 \begin{itemize}
  \item $D = F_D(U)$
  \item $\HDL = F_{\HDL}(D)$
  \item $\LDL = F_{\LDL}(D)$
  \item $\VLDL = F_{\VLDL}(D)$
  \item $\TRI = F_{\TRI}(\VLDL)$
  \item $AS = F_{AS}(\HDL, \LDL, \TRI)$
%joe4
    %  \item $TOT = F_{\HDL}(D) + F_{\LDL}(D)$
%joe6*: removed for now
%  \item $\TOT = F_{\HDL}(D) + F_{\LDL}(D)$.
\end{itemize}
%joe4
% $U$ is the exogenous variable, so that $D$ is determined by the
 %context. We will not specify precise equations, but will assume that
$D$ is determined by the exogenous variable (i.e., the context).
We do not specify the precise equations, but assume that 
$F_{AS}$ is a decreasing function of $\HDL$ and increasing in $\LDL$
and $\TRI$; we also assume that $F_{\TRI}$ is an increasing function of 
$\VLDL$.
%joe4
%but not identical to $\HDL + LDL$. The constraint $\C$ will include
%all of the states in which $\TOT = \HDL + LDL$.
%sander6: doesn't make sense
%not identical to $\HDL + LDL$. 
The constraint $\C$ consists of 
all the states where $\TOT = \HDL + \LDL + \VLDL$. 

%joe6: added this discussion (which incorporate ambiguity)
In this model, we can freely intervene on $\HDL$, $\LDL$, and $\VLDL$; the value
of $\TOT$ will change in the appropriate way, so as to maintain the
constraint.  Of course, if we intervene to set  $\HDL = \hdl$, $\LDL =
\ldl$, $\VLDL =
\vldl$ and $\TOT = \tot$ 
simultaneously, then unless the intervention is such that $\tot = \ldl +
\hdl + \vldl$, there will be no 
%sander8:
%models satisfying the equations, so all
states satisfying the constraints, so all
formulas of the form $[\LDL \gets \ldl, \HDL \gets \hdl, \VLDL \gets \vldl, \TOT \gets
  \tot]\phi$ will be vacuously true.  Indeed, in a context $\vec{u}$
where $\LDL = \ldl^*$, $\HDL = \hdl^*$, $\VLDL = \vldl^*$, and $\TOT = \tot^*$, an intervention
that sets $\TOT$ to $\tot' > \tot^*$ will also lead to an
inconsistency, unless we disconnect the equation for at least one of $\LDL$,
$\HDL$, or $\VLDL$.

%joe6*: here is where I talk about ambiguity.  Do we need to say
%anything else?
%sander8: I suggest at least mentioning the term ambiguous
%intervention when giving the last case. 
%joe21
%In the same context just described,
In the context just described, 
if we intervene to set $\TOT = \tot'$, while disconnecting the equation for $\LDL$
(but not the equations for $\HDL$ and $VLDL$),
there will be a unique solution to the equations, where $\HDL = \hdl^*$,
$\VLDL = \vldl^*$,
$\TOT = \tot'$, and $\LDL = \tot' - \hdl^* - \vldl^*$.  That is, intervening on $\TOT$
while disconnecting $\LDL$ results in the values of $\HDL$ and $\VLDL$ remaining
%joe8
%fixed, while $\LDL$ changes to maintain the constraint.  Similarly if
fixed, while $\LDL$ changes to maintain the constraint.  Similarly, if
we disconnect only $\HDL$ or only $\VLDL$.  If we disconnect all of 
$\HDL$, $\LDL$, and $\VLDL$ 
while setting $\TOT = \tot'$, then there will be multiple
solutions to the equations: $\HDL$, $\LDL$, and $\VLDL$ can take arbitrary values
that add up to $\tot'$.
%joe7: added
This makes $(\disc(\LDL,\HDL, \VLDL), \TOT=\tot')$ what Spirtes and Scheines
\cite{SS04} call an \emph{ambiguous intervention}.
\exam
%CH16 end of revisions to example

The next example shows that using the disconnection operation allows
us to distinguish different ways of implementing an intervention on a variable.
\xam\label{geometry}
A point is confined to the first quadrant of the Cartesian plane. 
We can represent its position using Cartesian coordinates $X$ and $Y$, with $0 < X, Y$.
We can also represent its position using polar coordinates $R$ and $\theta$, with
$0 < R$ and $0 < \theta < \frac{\pi}{2}$.
%joe4
%Our model will include constraint $\C$ that requires
The  model 
%sander6: 
%joe10
%includes constraints that require
requires
%joe10
%$X, Y, R,$ and $\theta$ to satisfy the usual relations:
$X, Y, R,$ and $\theta$ to satisfy the usual constraints:
\begin{itemize}
\item $R = \sqrt{X^2 + Y^2}$
%joe4
  %\item $\theta = \arctan(\frac{X}{Y})$
  %sander6:
 % \item $\theta = \arctan(\frac{X}{Y})$.
  \item $\theta = \arctan(\frac{Y}{X})$.
\end{itemize}
%joe4
%In the absence of intervention, the point will remain in place. Thus we can 
In the absence of intervention, the point will remain in place. Thus, we can 
have as our exogenous variable the previous position of the point $U =
(U_X, U_Y)$. 
%joe4
%The causal equations will be
The causal equations are
%The \fullv{causal} equations are
\begin{itemize}
\item $X = U_X$
\item $Y = U_Y$
\item $R = \sqrt{U_X^2 + U_Y^2}$
%joe4
  %\item $\theta = \arctan(\frac{U_X}{U_Y})$
  %sander6:
 % \item $\theta = \arctan(\frac{U_X}{U_Y})$.
  \item $\theta = \arctan(\frac{U_Y}{U_X})$.
\end{itemize}

%joe4: what does it mean to ``permit'' a causal intervention
%
%Interestingly, our model thus permits two different causal
%interventions to set the value of $X$, one for each GOF reduction that%
%it appears in: 
%joe6
%There are two different causal
%interventions that set the value of $X$,
%one for each GOF reduction that $X$ appears in:
If we want to set the value of $X$ in a meaningful way, we need to either
disconnect both $R$ and $\theta$ or disconnect $Y$ and $R$.  That is, 
%joe8
%we consider interventions of the form:
we consider interventions of the form
\begin{itemize}
%joe8
  %\item $\disc(R, \theta), X \gets x$
  \item $\disc(R, \theta), X \gets x$ and 
%joe6
    %  \item $\disc(Y, R), X \gets x$
  \item $\disc(Y, R), X \gets x$.
\end{itemize}
The first intervention sets the value of $X$ while leaving $Y$
%joe4
%alone -- technically while letting $Y$ take the value determined by the causal
alone; technically, this means that $Y$ takes the value determined by the causal
equations. This corresponds 
to sliding the point horizontally until the desired value of $X$ is reached. This intervention
removes $R$ and $\theta$ from the influence of 
%CH2
%the causal equations for $R$ and $\theta$, 
their causal equations,
effectively forcing them to take the values determined by the constraints. 
%joe4
%The second intervention -- $\disc(Y, R), X \gets x$ -- sets the value of $X$ 
The second intervention, $\disc(Y, R), X \gets x$, sets the value of $X$ 
while leaving $\theta$ alone. This corresponds to sliding the point along the ray
connecting its current position to the origin, until the desired value
of $X$ is reached.
%joe4: I think that this is irrelevant to the discussion
%Of course, whether either of these interventions is physically possible will
%depend upon the mechanical structure of the system being modeled. 
%chris4: I assume Joe is keeping the master bibliography. I will let him create an entry
%for the Blom and Mooij paper. 
%joe4
%
%An intervention that sets the value of $X$ while leaving $R$ alone is not free:
%joe6: saying the same thing without using the language
%Note that an intervention that sets the value of $X$ while leaving $R$
%alone is not free:
%CH2 re-wrote
%In general, we cannot meaningfully set the value of $X$ while leaving
%sander8:
%$F$
%$R$
 %alone;
 We can also consider the intervention
%joe8: no need for al ist with one item.
% \begin{itemize}
 % \item $disc(Y, \theta), X \gets x$.
 $disc(Y, \theta), X \gets x$.
% \end{itemize}
 This corresponds to rotating the point around the origin until $X = x$.
%CH2
In context $(u_X, u_Y)$, 
%the intervention $\disc(Y, \theta), X \gets x$ 
%joe8
%this intervention will only yield
this intervention only yields
%will not yield
solutions consistent with the constraint 
%unless 
when
%sander8:
%$x < \sqrt{u_X^2 + u_Y^2}.$
%$x \leq \sqrt{u_X^2 + u_Y^2}.$
$x < \sqrt{u_X^2 + u_Y^2}.$
%joe6: This is true, but I don't think that the point it makes is so
%relevant to our story, so I cut it (we'll need the space).
%
%The situation becomes more complicated if we allow the point to lie anywhere
%in the Cartesian plane. For example, we will need to decide what to do with 
%$\theta$ when $X = Y = 0$. And an intervention that sets the value of $X$
%while leaving $\theta$ alone will no longer be free, since reversing the
%sign of $X$ will require $\theta$ to change values.  
\exam

%joe17*: new section
\section{A sound and complete axiomatization for causal models with
constraints}\label{sec:axioms}

In this section we provide a sound and complete axiomatization for
the language $\LSd$ with respect to $\MSc$. 
%sander24: 
%and $\MSCc$.
Following \cite{Hal20}, we restrict to the case that $\S = (\U,V,\R)$
is finite, that is, $\U$ is finite, $\V$ is finite, and $\R(X)$ is finite
%joe18: actually, exogenous variables don't matter; we want
%endogenous variables to have finite range, but let's include both for
%simplicity. 
%for all $X \in \U$.
for all $X \in \U \cup \V$.
%sander18: and what about the range of endogenous variables?

Halpern \cite{Hal20} considers a somewhat richer language than we do,
where the context $\uu$ is part of the formula, 
not on the left-hand side of the $\models$.  Specifically, Halpern
considers primitive events of the form $X(\vec{u}) = x$, where
$M \sat X(\vec{u}) = x$ in Halpern's semantics iff 
$(M,\vec{u}) \sat X = x$ in our semantics.  We follow what is now the
more standard usage, with the context $\vec{u}$ on the left of $\sat$.
We thus follow \cite{HP21} and consider a variant of Halpern's 
axioms more appropriate for our language.
\commentout{
Before we review these
axioms, we need to recall some notation.  To capture acylicity,
Halpern defined the formula $Y \leadsto Z$, read ``$Y$ affects $Z$'',
as an abbreviation for the formula
\begin{equation} \nonumber
	\begin{split}
	  &\lor_{\XX \subseteq \V,
\xx \in \R(\XX), Y \notin \XX,
y \in \R(y), z \ne z' \in \R(Z)} \\
&\quad (\<\XX \gets \xx\>(Z = z) \land \<\XX \gets \xx,
		Y \gets y\>(Z = z'));%
                \footnote{In \cite{Hal20,HP21}, the corresponding
                formula used $[\cdot]$ rather than $\<\cdot\>$.  in
                acyclic models without constraints, the two choices
                are equivalent.  Once we add constraints, they are no
                longer equivalent and, for our purposes, $\<\cdot\>$
                is more appropriate.}
	\end{split}
\end{equation}
that is, $Y$ affects $Z$
if there is some setting of some endogenous variables $\XX$
for which changing the value of $Y$ changes the
value of $Z$ (see axiom D6 below).    
}

Here are Halpern's axioms, as given in \cite{HP21} (we keep the same
numbering):%
%the axiom D6 that we omit is for dealing with acyclic models):
%sander18: we should point this out, because the numbering looks weird
%without it.
%joe18: OK
%sander20: this sounds a bit unclear to me, as if we now cannot deal
%with acyclic models.  
%\footnote{The axiom D6 that we omit is for dealing with acyclic
%models.}
%joe20*: Dealing with acyclic models is a bit of a pain.  What does
%acylicity mean if F_X is undefined?  
%\footnote{The axiom D6 that we omit is for axiomatizing acyclic models.}
\footnote{The axiom D6 that we omit is for axiomatizing acyclic
models, since our focus is on general models here.}
\begin{itemize}
	\item[D0.] All instances of propositional tautologies.
	\item[D1.] $[\YY \gets \yy](X = x \rimp
		      X \ne x')$  if $x, x' \in \R(X)$, $x \ne x'$ \hfill
%joe22
%(functionality)
	\item[D2.] $[\YY \gets \yy](\bigvee_{x \in \R(X)} X = x)$
%joe21
%\hfill (definiteness)
\fullv{\hfill (definiteness)}
	\item[D3.] $\<\XX \gets \xx\>(W = w
		      \land \phi)
		      \rimp \<\XX \gets \xx, W \gets
		      w\>(\phi)$ if $W \notin \XX$%
                      \footnote{The requirement $W \notin \XX$ is not
%joe21
%explicit in \cite{Hal20}, but we need it to ensure that the
explicit in \cite{Hal20}, but is needed to ensure that the
	variables in $\<\XX \gets \xx, W \gets w\>$ are distinct.}
	%sander18
        %joe18: extraneous
%	%$W \notin \XX$ to ensure
	      \hfill
%joe21
%(composition)
\fullv{(composition)}
	\item[D4.] $[\XX \gets \xx](\XX = \xx)$ \hfill
%joe21
%(effectiveness)
\fullv{(effectiveness)}
	\item[D5.] $(\<\XX \gets \xx, Y \gets y\> (W = w \land
		      \ZZ = \zz)  \land
		      \<\XX \gets \xx, W \gets w\> (Y = y \land
\ZZ = \zz))$\\
	      $\mbox{ }\ \ \ \rimp \<\XX \gets \xx\> (W
		      = w \land Y = y \land \ZZ = \zz)$ if $\vec{Z}
		      = \V - (\vec{X} \cup \{W,Y\})$
%joe21
%\mbox{ } \hfill (reversibility)
\mbox{ } \fullv{ \hfill (reversibility)}
%	\item[D6.]
%	      $(X_0 \leadsto X_1 \land \ldots \land X_{k-1} \leadsto X_k)
%		      \rimp
%		      \neg (X_k \leadsto X_0)$ \hfill (recursiveness)
	\item[D7.] $([\XX \gets \xx]\phi \land [\XX \gets
			      \xx](\phi \rimp \psi)) \rimp  [\XX \gets \xx]\psi$
%joe21
\fullv{\hfill 	      (distribution)}
	\item[D8.] $[\XX \gets \xx]\phi$ if $\phi$ is a propositional
%joe21
%tautology  \hfill (generalization)
tautology  \fullv{\hfill (generalization)}
	\item[D9.]
	      $\<\YY \gets \yy\>true \wedge (\<\YY \gets \yy\>\phi
		      \Rightarrow [\YY\gets \yy]\phi)$
	      \ if $\YY= \V$ or, for some $X \in \V$, 
	      $\YY = \V - \{X\}$
%joe23: redundent
%\footnote{Halpern \cite{Hal20} did not include
%	the case that $\YY = \V$, but it seems necessary for completeness.}
%joe21
%\hfill           (unique        	      outcomes for $\V$ and
%$\V - \{X\}$)%
\fullv{\hfill           (unique        	      outcomes for $\V$ and
$\V - \{X\}$)%
\footnote{Halpern \cite{Hal20} did not include
	the case that $\YY = \V$, but it seems necessary for completeness.}
}
%	\item[D10(a).]
%          	  $\<\YY \gets \yy\>true$ \hfill (at least one outcome)
%	\item[D10(b).]  $\<\YY \gets \yy\>\phi \rimp [\YY \gets
%                    \yy]\phi$ \hfill (at most one outcome)
	\item[MP.] From $\phi$ and $\phi \rimp \psi$, infer $\psi$
%joe21
%\hfill (modus ponens)
\fullv{\hfill (modus ponens)}
\end{itemize}
We refer the reader to \cite{HP21} for a detailed discussion of how these
axioms compare to those of Halpern \cite{Hal20}.

Let $\AXp$ consist
of axiom schema D0-D5 and D7-D9, and inference rule MP.
%let $AXpa$ be the result of adding D6 and D10(b) to $AX^+$, and
%removing D5 and D9.%
%\footnote{We can remove $AX^+_{rec}$ because, as already pointed out
%by Galles and Pearl \citeyear{GallesPearl98}, it follows from the
%remaining axions in $AXpa$.  D9 is clearly a special case of D10.}

\begin{theorem}\label{thm:completeness-for-SEMs}
   {\rm  \cite{Hal20}}
     $\AXp$
     %(resp., $\AXpa$)
     is a sound and complete axiomatization for
	the language $\LS$ with respect to $\MS$.
        %(resp., $\MSa$).
\end{theorem}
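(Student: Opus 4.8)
The plan is to prove soundness and completeness separately, with essentially all the work in completeness via a Henkin-style canonical construction. Since $\S$ is finite there are, up to propositional equivalence, only finitely many formulas in $\LS$, which keeps every step finitary.

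For soundness I would check that each axiom schema is valid in every $M \in \MS$ at every context and that MP preserves validity. D0, D7, D8 are the usual propositional and modal facts about the operator $[\vec{Y}\gets\vec{y}]$, and D1, D2 merely record that each solution state assigns every variable a unique value. D4 holds because $\F_{\vec{Y}\gets\vec{y}}$ forces $\vec{Y}=\vec{y}$ in every solution. The three substantive checks are D3, D5, and D9. For D3, a solution $\vec{v}$ of $M_{\vec{X}\gets\vec{x}}$ with $W=w$ stays a solution after we also fix $W=w$, since setting $W$ to the value it already takes alters no equation. For D9, when $\vec{Y}=\V$ effectiveness already gives a unique solution, and when $\vec{Y}=\V-\{X\}$ the single free variable $X$ is pinned down by its equation $F_X$ applied to the fixed values. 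D5 (reversibility) is the condition expressing that all interventions are read off one common family $\F$: from witnesses for the two antecedents, the state they share on $\ZZ=\V-(\XX\cup\{W,Y\})$, together with $W=w$ and $Y=y$, satisfies every equation of $M_{\vec{X}\gets\vec{x}}$ simultaneously.

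For completeness I show every $\AXp$-consistent $\Phi$ is satisfiable in $\MS$. Because $\LS$ formulas never mention the context, it suffices to build one model $M$ and one context $\uu$ with $(M,\uu)\sat\Phi$. First extend $\{\Phi\}$ to a maximal $\AXp$-consistent set $V$ (Lindenbaum). Writing $\V=\vec{v}$ for $\bigwedge_{Z\in\V} Z=v_Z$, define the candidate solution sets $S_{\vec{Y},\vec{y}}=\{\vec{v}: \<\vec{Y}\gets\vec{y}\>(\V=\vec{v})\in V\}$; D1 and D2 guarantee these are genuine sets of states. I recover the equations from D9: for each $X$ and each assignment $\ww$ to $\V-\{X\}$, the two conjuncts of D9 (with $\vec{Y}=\V-\{X\}$) force $S_{\V-\{X\},\ww}$ to be a singleton $\{\vec{v}\}$, so I set $F_X(\ww)=v_X$. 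This yields a total $\F$ and a model $M=(\S,\F)$. The crux is the Solution Lemma: for every intervention and every $\vec{v}$, the state $\vec{v}$ solves $M_{\vec{Y}\gets\vec{y}}$ iff $\<\vec{Y}\gets\vec{y}\>(\V=\vec{v})\in V$. One direction uses D3: from $\<\vec{Y}\gets\vec{y}\>(\V=\vec{v})\in V$, composition lets me fix all of $\V-\{Z\}$ to their $\vec{v}$-values while keeping $Z=v_Z$, so $\vec{v}\in S_{\V-\{Z\},\vec{v}_{-Z}}$ and hence $F_Z(\vec{v}_{-Z})=v_Z$ for each $Z\notin\vec{Y}$, while D4 handles the intervened coordinates. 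Given the lemma, a routine truth-lemma induction shows $(M,\uu)\sat\psi$ iff $\psi\in V$: boxed primitive-event formulas reduce, since $\R$ is finite, to finite Boolean combinations of the atoms $\<\vec{Y}\gets\vec{y}\>(\V=\vec{v})$ handled by the lemma (using D7, D8 for the box and maximal consistency for the connectives). As $\Phi\in V$, we get $(M,\uu)\sat\Phi$.

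The hard part is the converse direction of the Solution Lemma. The construction of $\F$ via D9 only certifies each $F_Z$ pointwise, on inputs coming from genuine states reported by $V$; to conclude that an \emph{arbitrary} state satisfying all non-intervened equations is itself reported by $V$ as a solution of $M_{\vec{Y}\gets\vec{y}}$, I must recombine these local equation-satisfaction facts into a single coherent statement about one intervention. This is exactly what reversibility buys: iterating D5 lets me un-fix the coordinates outside $\vec{Y}$ one at a time, starting from an almost fully determined intervention and provably preserving $\vec{v}$ as a solution at each step. Verifying that this iteration terminates at the right $\vec{Y}$ and introduces no spurious solutions is the delicate bookkeeping; everything else is either routine propositional and modal manipulation or an immediate consequence of the finiteness of $\S$.
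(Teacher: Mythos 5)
Your proposal is correct and follows essentially the same route as the proof this theorem rests on (Halpern's, which the paper only cites but sketches inside its proof of Theorem 2): Lindenbaum extension to a maximal consistent set, reading off each $F_X$ from the almost-total interventions $\<\V - \{X\} \gets \vec{y}\>(X=x)$ with well-definedness secured by D1, D2, and D9, and then a truth lemma proved by induction on the number of un-intervened variables, using D3/D4 in one direction and iterated D5 (reversibility) in the other, with the $\YY = \V$ and $\YY = \V-\{X\}$ cases of D9 as the base. Your reading of $\MS$ as models with a total $\F$ (without which D9 would be unsound) matches the setting of the cited result, so nothing in your argument deviates materially from that construction.
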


We now want to extend this result to causal models with constraints.
The first step is to deal with disconnection, which can be done using
the following surprisingly simple axiom, where $\R(\vec{X})
= \times_{X \in \vec{X}} \R(X)$.
%sander18: maybe we should point out that this is where the assumption
%of finite range matters.
%joe18: It's one of several places where it matters.  It also matters
%in D2 and D9''.  Spencer Peters and I have an axiom that replaces D2
%in the infinite case and, as I said, I suspect that we can get a
%completeness result even in the infinite case.  But, in any case, we
%should either point out all three cases or none of them.  There's no  
%reason just to dwell on DSC.  

%joe22
%\begin{itemize}
%sander22
%\begin{itemize}[leftmargin=\parindent + 5pt, align=left,
\begin{itemize}[leftmargin=15pt, align=left,
labelwidth=1\parindent, labelsep=5pt]
\item[{DSC.}]  $[\disc(\vec{X}), \vec{Y} \gets
  \vec{y}]\phi \dimp \bigwedge_{\vec{x} \in \R(\vec{X})}
  [\vec{X} \gets \vec{x}, \vec{Y} \gets   \vec{y}]\phi.$
\end{itemize}
Roughly speaking, DSC says that disconnecting all the variables in
$\vec{X}$ is the same as nondeterministically assigning the variables
in $\vec{X}$ an arbitrary value in their range.  As we shall see, DSC
is exactly what we need to capture disconnection.

%joe18
%We also need to slightly modify D9. We break the modification up into
%two parts; the first just removes the first conjunct, while the second
%is a replacement of the first conjunct by a weaker version of it:
We also need to modify D9. We break the modification up into
two parts, which we discuss further below.
%joe22: I still don't like that the second line of the axioms does not
%align with the first line.  I've tried to fix it, but failed.  
%\begin{itemize}
%sander22:
%\begin{itemize}[leftmargin=\parindent + 5pt, align=left,
\begin{itemize}[leftmargin=15pt, align=left,
labelwidth=1\parindent, labelsep=5pt]
\item[D9$'$.]
%joe18*
%$\<\YY \gets \yy\>\phi
%		      \Rightarrow [\YY\gets \yy]\phi$
%	    if  $\YY = \V$ or for some $X \in \V$, 
%	      $\YY = \V - \{X\}$.
$(\<\YY \gets \yy\>(X=x) \land
\<\YY \gets \yy\>(X=x') \land
%sander19:
%\<\YY \gets \yy,X=x''\>\true) \rimp \<\YY \gets \yy\>(X=x'')$
%joe19
%\<\YY \gets \yy,X\gets x''\>\true) \rimp \<\YY \gets \yy\>(X=x'')$
%sander20: y' is hard to read with the vector notation, is there a way
%to make it clearer (so that it's not confused with y)?
%joe20: I changed \vec{y}' and \yy' to \yyp globally in this proof
%(unmarked).  I then defined \yyp as \vec{y}^*, but we can play around
%with other options.
%other suggestions.
\<\YY \gets \yyp,X\gets x''\>\true) \rimp \<\YY \gets \yyp\>(X=x'')$
if $\YY = \V - \{X\}$ and $x \ne x'$.
\item[D9$''$.] $\land_{x \in \R(X)} \<\YY \gets \yy, X \gets
x\>\true \rimp \<\YY \gets \yy\> \true$, where $\YY = \V-\{X\}$.
\end{itemize}
%joe18: added
%joe19: it's really F_X is undefined altogether, not just undefined
%for some values
%D9$'$ is intended to deal with the case that $F_x(\vec{u},\vec{y})$ is
%undefined.  This must be the case if there are two distinct values $x,
D9$'$ is intended to deal with the case that $F_X$ is
undefined (i.e., $\F$ does not associate a function $F_X$ with the
%joe23: I'm making these changes to deal with the next sander23 comment
%variable $X$).  This must be the case if there are two distinct values $x,
variable $X$) in a causal model $M$.  This must be the case if there
are two distinct values $x, 
x' \in \R(X)$ such that $(\<\YY \gets \yy\>(X=x) \land
%joe23
%\<\YY \gets \yy\>(X=x')$ is true.  In that case,
\<\YY \gets \yy\>(X=x')$ is true in $(M,\vec{u})$ for some context
$\vec{u}$.  In that case, 
%joe19
%$\<\YY \gets \yy\>(X=x'')$ must be true for all $x \in \R(X)$ such that
%$(\vec{y},x'') \in C$, which will be the case exactly if
%joe23
%$\<\YY \gets \yyp\>(X=x'')$ must be true for all
$\<\YY \gets \yyp\>(X=x'')$ must be true in $(M,\vec{u})$ for all
$\yyp \in \R(\vec{Y})$ and $x'' \in \R(X)$ such that
%sander23: how do we incorporate the context into the constraint here?
%As I've mentioned in the past, I find this axiomatization confusing,
%because the context is not made explicit. Maybe now we have to make
%it explicit?
%joe23: I've dealt with your concern.  We definitely do *not* want to
%make the context explicit in the axioms.  The axioms are formulas and
%formulas do not mention the context.
%$(\yyp,x'') \in \C$, which will be the case exactly if
$(\vec{u},\yyp,x'') \in \C$, which will be the case exactly if
%sander19:
%$\<\YY \gets \yy,X=x''\>\true$ is true.
%joe19
%$\<\YY \gets \yy,X\gets x''\>\true$ is true.
%joe23
%$\<\YY \gets \yyp,X\gets x''\>\true$ is true.
$\<\YY \gets \yyp,X\gets x''\>\true$ is true in $(M,\vec{u})$.
%joe18: Is the explanation of D9'' helpful?  I'm not sure what to put here.
%sander19: I don't understand it yet.
%joe19*: How about the following
%helps.  It's a technical axiom that I need in the construction of the
%model. The fact that it's sound is a bit subtle, and I'm not sure
%that I can give intuition for it.
%joe19
%As we shall see, D9$''$
%guarantees that $F_X(\vec{u},\vec{y})$ is defined if it is not the
%case that $(\<\YY \gets \yy\>(X=x) \land
%\<\YY \gets \yy\>(X=x')$ for two distinct values $x$ and $x'$, and the
%value of $x$ is unconstrained.
D9$''$ says that, for a fixed setting $\vec{y}$ of the variables in
$\vec{Y} = \V-\{X\}$, if the constraints do not preclude $X$ from
taking any value, then there is some solution to the equations
$\F_{\vec{Y} \gets \vec{y}}$, whether or not $F_X$ is defined.

%joe23*: Your comment below is quite right, Sander.  We can't express
%the axiom that we need in the current language if \C is fixed.  We
%have two choices: (1) We extend the language so that we can talk
%about the context in the language (i.e., we allow formulas of the
%form U=ul, where u is context variable) (or, alternatively, we use
%the language that I used in my 2000 paper, which also allowed
%contexts explicitly in formulas, but in what I now consider to be an
%ugly way), or (2) we cut this material.  I've gone for the latter
%solution, since it requires less work.
%If $\C$ is fixed, we need another axiom to capture this; it is
%parameterized by $\C$:
%joe22
%\begin{itemize}
%sander22
%\begin{itemize}[leftmargin=\parindent + 5pt, align=left,
%sander23: here we run into trouble by allowing exogenous variables in
%the constraints. The language is really implicitly always focusing on
%a single context, but different contexts can now lead to different
%states \vec{v} that are allowed... So how do we deal with this? 
%joe23: see above
%\begin{itemize}[leftmargin=15pt, align=left,
%labelwidth=\parindent, labelsep=5pt, itemsep=2pt]
%\item[\textrm{CNS($\C$)}.] $\bigwedge_{\vec{v} \in \C} \<\V
%= \vec{v}\>\true \land 
%\bigwedge_{\vec{v} \notin \C} \neg \<\V = \vec{v}\>\true$.
%\end{itemize}

Let $\AXpd$
%(resp. $\AXpd$)
be the result of adding axiom DSC 
to $\AXp$ and replacing D9 by D9$'$ and D9$''$.
%(resp., adding DSC and D12 to and removing
%D10(a) from $\AXpa$).   
%joe23
%Let $\AXpdc$ be the result of adding axiom CNS($\C$)
%to $\AXpd$.

\begin{theorem}
     $\AXpd$
     %(resp., $\AXpad$)
%joe23
%(resp., $\AXpdc$)
     is a sound and complete axiomatization for
	the language $\LSd$ with respect to 
%joe23
%$\MSc$ (resp. $\MSCc$).
$\MSc$.
        %(resp., $\MSa$).
\end{theorem}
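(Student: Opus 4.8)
The plan is to prove soundness and completeness separately, first eliminating the disconnection operator via DSC and then adapting the canonical-model construction behind Theorem~\ref{thm:completeness-for-SEMs} \cite{Hal20} so that it also produces a constraint set $\C$. For soundness, the axioms inherited from $\AXp$ (D0--D5, D7, D8) stay sound over $\MSc$, since each only reasons about solutions of intervened models and the constraint semantics merely intersects the solution set with $\C$; I would check D3 and D5 explicitly, noting that a state witnessing the antecedent of D3 already lies in $\C$ and still satisfies the further-intervened equations. For the new axioms I argue directly from the semantics. DSC holds because disconnecting $\vec{X}$ lets those variables range over all constraint-consistent values, so the solution set of the left-hand side is exactly the union over $\vec{x}\in\R(\vec{X})$ of the solution sets of $[\vec{X}\gets\vec{x},\vec{Y}\gets\vec{y}]$, turning ``$\phi$ in all solutions'' into the conjunction. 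For D9$'$, the conjuncts $\<\YY \gets \yy\>(X=x)$ and $\<\YY \gets \yy\>(X=x')$ with $x\neq x'$ and $\YY=\V-\{X\}$ force $F_X$ to be undefined (a defined $F_X$ would pin $X$ to one value), after which, for any $\yyp$, a value $x''$ is achievable iff $(\vec{u},\yyp,x'')\in\C$; the third conjunct supplies such an $x''$ and the conclusion records that it is achievable. For D9$''$, if every value of $X$ is constraint-consistent given $\yy$ then a solution of $\F_{\YY \gets \yy}$ exists whether or not $F_X$ is defined.

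\textbf{Reducing disconnection.} For completeness I would first show, using DSC with propositional reasoning (D0, D7, D8, MP), that every formula of $\LSd$ is provably equivalent in $\AXpd$ to an $\LS$-formula obtained by repeatedly rewriting each $[\disc(\vec{X}),\vec{Y}\gets\vec{y}]\phi$ as $\bigwedge_{\vec{x}\in\R(\vec{X})}[\vec{X}\gets\vec{x},\vec{Y}\gets\vec{y}]\phi$. Together with the soundness of DSC this reduces the problem to completeness of the $\LS$-fragment of $\AXpd$ (that is, $\AXp$ with D9 replaced by D9$'$ and D9$''$) with respect to $\MSc$.

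\textbf{The constrained canonical model.} Given a consistent $\LS$-formula $\psi$, I extend it to a maximal $\AXpd$-consistent set $\Gamma$. Since $\LS$-formulas carry no context symbol and the truth of a basic causal formula depends only on the context, it suffices to realize $\Gamma$ at a \emph{single} context $\vec{u}_0$, leaving the model's behavior at other contexts free. I read off $\C$ by declaring $(\vec{u}_0,\vec{v})\in\C$ iff $\<\V \gets \vec{v}\>\true\in\Gamma$ (full interventions recover $\C$), and define each $F_X$ by cases on the behavior of $X$ under $\YY\gets\yy$ with $\YY=\V-\{X\}$: if some $\yy$ makes two distinct values of $X$ achievable, leave $F_X$ undefined; otherwise set $F_X(\vec{u}_0,\yy)$ to the unique achievable value when one exists, and to a constraint-\emph{inconsistent} value otherwise. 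Here D9$''$ is what guarantees that a constraint-inconsistent value is available exactly when no value is achievable, so the model correctly has no solution there. One then verifies that the achievable set of $X$ matches $\Gamma$: in the non-functional case this is precisely the content of D9$'$, whose mismatched subscripts $\yy$ and $\yyp$ propagate ambiguity at one setting to full constraint-determination at every setting, while consistency of the functional assignment with $\C$ follows from D3.

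\textbf{Main obstacle.} The remaining step---that these ``one free variable'' solution sets extend to the correct solution set for \emph{every} intervention $\XX\gets\xx$---I would inherit essentially verbatim from Halpern's argument \cite{Hal20}, where composition (D3) and reversibility (D5) do the work across possibly cyclic models. The crux, and the place I expect to spend the most care, is the representation lemma that the constructed pair $(\F,\C)$ reproduces exactly the solution sets encoded in $\Gamma$, in particular the dichotomy that each variable is either functional (a constraint-consistent singleton at every setting) or fully constraint-determined. This is exactly where the original D9 fails---constraints can destroy existence, and missing equations can destroy uniqueness---and where D9$'$ and D9$''$ must carry the load; getting their interaction with D3 and D5 right is the delicate part.
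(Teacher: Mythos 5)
Your proposal is correct and follows essentially the same route as the paper's own proof: soundness is checked axiom-by-axiom from the semantics (with the same arguments for DSC, D9$'$ via undefinedness of $F_X$, and D9$''$), and completeness proceeds by DSC-elimination of $\disc$, a maximal consistent set, reading off $\C$ from full interventions $\<\V \gets \vec{v}\>\true$, the same three-case definition of $F_X$ (undefined / unique achievable value / a constraint-inconsistent value supplied by D9$''$), and a truth lemma whose one-free-variable case uses D3 and D9$'$ and whose inductive step is inherited from Halpern via D3 and D5. The only cosmetic difference is that you anchor the construction at a single context $\vec{u}_0$, whereas the paper defines $F_X$ and $\C$ uniformly across all contexts and then notes context-independence; both work.
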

%joe18
%\begin{proof}  We start with $\AXpd$ and $\MSc$. For soundness, given
%     a causal  model $M = (\S,\F,\C) \in \MSc$,
%from Theorem~\ref{thm:completeness-for-SEMs}, we know that D0-D5 and D7-D9

%sander23: made it the only version
%joe20*: short version.
%joe21
%\shortv{For reasons of space we defer the proof to the appendix and
%     here sketch only the completeness proof for $\AXpd$, stressing 
%the key differences from that of \cite{Hal20}.
\begin{proof}
%joe23
%For reasons of space, we 
%     prove only completeness for $\AXpd$, stressing 
%the key differences from that of \cite{Hal20}; the extension to
%$\AXpdc$ is straightforward.
%joe21: Chris, you asked if this was only the completeness proof.  The
%answer is yes.  Added the next sentence (so that we don't feel
%compelled to reprove what we did here. 
%joe23
%sander26
\commentout{
We prove completeness here, focusing  on the parts of the proof that
differ from that of \cite{Hal20}. 
We provide the parts of the soundness argument that differ from earlier work
in the appendix.
}
We here focus on the parts of the proof that
differ from that of \cite{Hal20}. 

%joe21: Chris, you asked what \phi was.  I realized I left out the
%first paragraph of the completeness proof, which I have added back
For completeness, using DSC, we can
eliminate all occurrences of $\disc(\vec{X})$ from formulas, so it
suffices to show that if a formula $\phi \in \LS$ is valid in $\MSc$,
then it is provable from $\AX'$, where $\AX'$ is identical to $\AXp$
except that D9 is replaced by D9$'$ and D9$''$.  
The steps of the argument are standard: It suffices to show that if a formula 
$\phi \in \LS$ is consistent with respect to $\AX'$ (i.e., we cannot prove
$\neg \phi$ in $\AX'$), then there is a causal model with constraints
$M \in \MSc$ and 
%joe23: your comment below is correct, Sander.  I corrected this
%sander24: I'm confused... Don't you say below that it was incorrect?
%joe24: Sorry; I changed my mind but the comments don't reflect it.
%It was incorrect.  
%sander25: So that means the initial line should stay, I take it, so I changed it back.
a context $\vec{u}$ such that
%an extended state $(\vec{u},\vec{v})$ such that
%sander23: here we've left out the state \vec{v}, but isn't that
%permitted only when \phi contains an intervention?   
%joe23: the language \L(\S) of causal formulas consists only of formulas that
%are Boolean combinations of formulas that contain interventions.
%sander24: I never realized that. For standard acyclic models this
%doesn't matter, as []\phi is equivalent to \phi, but here that's no
%longer true. So isn't it strange to leave out formulas without
%interventions from the language? For example, this language can't
%express the first condition of actual causation... Since we have D0,
%adding them comes at zero cost, right?
%joe24: we defined \L(\S) on page 4.  YOu're quite right that we could
%take a more general language, that has propositional formulas that
%don't mention intervention.  Moreover, the same axiomatization would
%be sound and complete for the more general language.  I don't objecyt
%to moving to the more general language, but it would be a nontrivial
%change. 
%sander25: as such I'm sure nobody will notice, but it would seem strange to me to leave out propositional formulas. However, I thought it would be a trivial change, given D0. If it's too complicated, we can skip it.
$(M,\vec{u}) \sat \phi$.

We extend $\{\phi\}$ to a maximal set $C$ of formulas consistent with
$\AX'$.  We 
then use the formulas in $C$ to define a model $M =
(\S,\F,\C) \in \MSc$ such that
%sander19: why all contexts? Aren't we just focussing on one context
%at a time anyway? I.e., aren't the causal models "really" (M,\vec{u})
%instead of M? See next comment.
%joe19: see my previous comment
in all contexts $\vec{u}$ of $M$ and for all formulas $\psi \in \LS$,
we have that $(M,\vec{u}) \sat \psi$ iff  
$\psi \in C$.  Halpern \cite{Hal20} used the formulas in $C$ to
define $\F$, by taking $F_X(\vec{u},\vec{y}) = x$ if
$\vec{y} \in \R(\V - \{X\})$ and 
%sander19: Here there is no more explicit reference to \vec{u}, which
%looks a bit odd, but it's there implicitly, right?
%joe19: I'm not sure what you mean by "there implicitly".  We can't
%talk about contexts in formulas.
$\<\vec{Y} = \vec{y}\>(X=x) \in C$.
It follows easily from D1, D2, and D9 that $F_X$ is well defined: there is a
unique value $x \in \R(X)$ such that $\<\vec{Y}
%joe19
%= \vec{y}\>(X=x) \in C$.  In our case, we do not have axiom D9; just
%joe20
%= \vec{y}\>(X=x) \in C$.  In our case, we mjust work harder.  Wo not
= \vec{y}\>(X=x) \in C$ for $\vec{Y} = \V-\{X\}$.  We must work harder
here, since we do not 
have axiom D9, only axioms D9$'$ and D9$''$.
%joe18*
%So we know that for each variable $X \in \V$, there is 
%at most one $x \in \R(X)$ such that $\<\vec{Y}
%= \vec{y}\>(X=x) \in C$, where $\vec{Y} = \V - \{X\}$, but there may
%not be any.
%joe20
%As observed earlier, for each variable $X \in \V$, there may be a unique
For each variable $X \in \V$, there may be a unique
$x \in \R(X)$ such that $\<\vec{Y}
%joe20
%\gets \vec{y}\>(X=x) \in C$, where $\vec{Y} = \V - \{X\}$, but there
\gets \vec{y}\>(X=x) \in C$, but there
may not be any such value $x$, and there may be more than one.
We have to define $\F$ in all these cases.

We proceed as follows.
%sander23: this needs to change, since we need to take into account
%the context. However, isn't it simply the case that all states will
%always be in C? At least, when we're considering M^S_c, i.e., we're
%not using CNS(C).
%joe23: yes
%sander24: So why didn't you simply change it to all states? Now it
%looks weird, because it seems like we're forcing the same constraints
%of states to hold for each context, whereas in reality we just have
%that all extended states will be in \C.
%joe24: I didn'ty really understand your comment, but I'm OK with
%what's written below.
%We define $\C$ to consist of all states $\vec{v}$ such that $\<\V
We define $\C$ to consist of all extended states $(\vec{u},\vec{v})$
such that $\<\V 
\gets \vec{v}\>\true \in C$.  
%joe18*
%Note that if $\vec{Y} = \V - \{X\}$, $\vec{y} \in \R(\vec{Y})$, and
%$\<\vec{Y} \gets \vec{y}\>\true \in C$, then it follows   
%using standard modal reasoning%
%\footnote{Specifically, using D0, D1, D2, D7, D8, and MP, we can show 
%(1) $([\vec{X} \gets \vec{x}]\psi_1 \land
%[\vec{X} \gets \vec{x}]\psi_2) \dimp 
%[\vec{X} \gets \vec{x}](\psi_1 \land \psi_2)$, (2) 
%$\<\vec{X} \gets \vec{x}\>(\psi_1 \lor \psi_2) \dimp
%(\<\vec{X} \gets \vec{x}\>\psi_1 \lor \<\vec{X} \gets \vec{x}\>\psi_2)$,
%and (3) $(\<\vec{X} \gets \vec{x}\>\psi_1 \land
%[\vec{X} \gets \vec{x}]\psi_2) \rimp \<\vec{X} \gets \vec{x}\>(\psi_1 \land \psi_2)$; see \cite{HP21}.}
%and D1, D2, and D9 that there is a unique $x \in \R(X)$ such that
%$\<\vec{Y} \gets \vec{y}\>(X = x)$.
%In this case, we take
%joe19*: reordered argument
%sander20:
%We define $\F$, for each variable $X \in \V$, we consider three cases. 
To define $\F$, for each variable $X \in \V$ we consider three cases. 
Given $X$, if for some
$\vec{y} \in \vec{Y} = \V -\{X\}$ there are two values $x$ and $x'$ in
$\R(X)$
such that both
$\<\vec{Y} \gets \vec{y}\>(\vec{X} = \vec{x}) \in C$ and
$\<\vec{Y} \gets \vec{y}\>(\vec{X} = \vec{x}') \in C$, then $F_X$ is
undefined.  Otherwise, for all $\vec{y} \in \R(\vec{Y})$, there is at
most one $x \in \R(X)$ such that $\<\vec{Y} \gets \vec{y}\>(X = x) \in
C$.  Thus, 
if there is some $x \in \R(X)$ such that
$\<\vec{Y} \gets \vec{y}\>(X = x) \in C$,
%joe19
then $x$ is unique, and 
we take
%sander19: where's the context? This relates to my previous
%comment. Can't we just refer back to the one context \vec{u} from
%above? 
%joe19: 
%$F_X(\vec{y}) = x$.
%sander23: here you're exploiting cyclicity, right? Otherwise we
%wouldn't be allowed to do this. So essentially you're constructing a
%model without exogenous variables, i.e., they don't matter at all for
%the endgeonous ones. 
%joe23*: I don't understand this question.  Cyclicity is a semantic
%property of a model.  My argument is purely syntactic.   The plan is
%to dfine F_X(\vec,\vec{y}) the same way for all values of \vec{u}) (if F_X
%it's defined at all -- that's the case we're considering here).  If
%there's a unique x such that \<\vec{Y} \gets \vec{y}\>(X = x) \in C,
%then that's how we define F_X.  I don't think any changes need to be
%made here, and I don't think I'm exploiting cyclicity
%sander24*: I was forgetting about the possibility of a constant equation X=x.  
%value 
$F_X(\vec{u},\vec{y}) = x$ for all contexts $\vec{u}$.
%joe18
%joe19: moved up (and corrected)
%If there are at least two values $x \ne x'$ such that 
%%sander19: why was this commented?
%%$\<\vec{Y} \gets \vec{y}\>(X = x) \in C$ for one value of x, more
%than one value of x, or none.
%$\<\vec{Y} \gets \vec{y}\>(X = x) \in C$ and
%$\<\vec{Y} \gets \vec{y}\>(X = x')  \in C$, then we take 
%%joe19
%%$F_X(\vec{y})$ to be undefined.
%joe18
%If $\<\vec{Y} \gets \vec{y}\>\true \notin C$, then
%it follows from D12 that for some $x \in \R(X)$,
%$\<\YY \gets \yy, X \gets x\>\true  \notin C$.  
%sander24: If C indeed contains all extended states, we can forget about this case.
Finally, if there are no values $x \in
\R(X)$ such that $\<\vec{Y} \gets \vec{y}\>(X = x) \in C$, then
%sander20: isn't this unnecessarily complicated? If there are no
%values, this means the equation F_X won't matter at all, so we can
%just choose any x we want to define F_X.
%joe20*.  THe problem is if the are no values for some \vec{y} but
%there is a unique value for some \vec{y}'.  
%Then you have to define
%F_X, and it matters how you define it.  Specifically, for a \vec{y}
%for which there are no values in C, if you define F_X(\vec{y}) = x
%and $(\vec{y},x) \in C, then M,u,v |= <Y <- y>(X=x), which we don't
%want.  We need this construction (and D9'') to deal with this
%situation.  (So no, it it is not unnecessarily complicated.)  I'm
%happy to explain in more detail if needed.  
there must be some $x \in \R(X)$ such that
%joe21
%$\<\vec{Y} \gets \vec{y}, \vec{X} \gets \vec{x}\> \notin C$, for
$\<\vec{Y} \gets \vec{y}, \vec{X} \gets \vec{x}\>\true \notin C$, for
otherwise, by  
D9$''$, $\<\vec{Y} \gets \vec{y}\>\true \in C$, and it follows by
standard modal reasoning, using D2, D7, D8, and MP, that
$\<\vec{Y} \gets \vec{y}\>(X = x) \in C$ for some $x \in \R(X)$.
We define
%joe19
%$F_X(\vec{y}) = x$.  (If there is more than one value $x$ such 
$F_X(\vec{u},\vec{y}) = x$ for all contexts $\vec{u}$.  (If there is more than one value $x$ such 
that $\<\YY \gets \yy, X \gets x\>\true  \notin C$, we can choose one
arbitrarily.)  Let $M = (\S,\F,\C)$, for this definition of $\F$ and
$\C$.  

%joe19: unnecessary, since we're considering only causal formulas
%Choose $\vec{v}$ such that $\vec{v} \sat \psi$ for all primitive
%events $\psi \in C$.  (There must be such a state $\vec{v}$, since $C$
%is consistent.)
%joe19*:
%joe20
%Since the definition of $F_X$ (if it is defined) is independent of
Since $F_X$ (if it is defined) is independent of
%joe21: Chris, you asked if the should be \LSd.  The answer is no,
%because I've reduced to \LS (see the paragraph that I've added that
%was missing before).
$\vec{u}$, it follows that for all formulas $\psi \in \LS$, 
$(M,\vec{u}) \sat \psi$ for some context $\vec{u}$ iff
$(M,\vec{u}) \sat \psi$ for all contexts $\vec{u}$.
%joe20: line shaving
%We show that for all formulas
We show that for all
$\psi \in \LS$, we have that 
%joe19
%$(M,\vec{u},\vec{v}) \sat \psi$ iff $\psi \in C$.
$(M,\vec{u}) \sat \psi$ for some (and hence all) contexts $\vec{u}$
iff $\psi \in C$.
%By DSC, it suffices to consider formulas that do not mention $\disc(X)$.
%joe20
%Using standard induction arguments and modal reasoning as
Using standard modal reasoning as
in \cite{Hal20}, it suffices to consider primitive events and formulas
of the form $\<\vec{Y} \gets \vec{y}\>(\vec{X} = \vec{x})$.  Using D4, we
can further restrict to the case where $\vec{X}$ and $\vec{Y}$ are
disjoint.  We proceed by induction on $|\V-\vec{Y}|$.  If  $|\V
- \vec{Y}| = 0$, then $\vec{Y} = \V$ and we can take $\vec{X}
= \vec{x}$ to be the formula $\true$
%joe20
%(this is the standard way of
%interpreting the empty conjunction)
and take $\vec{Y} = \vec{y}$ to be
%joe19 v' -> v
%$\V = \vec{v}'$ for some state $\vec{v}'$.  Note that
%$\<\V \gets \vec{v}'\>\true \in C$ iff $\vec{v}' \in \C$ iff
%$(M,\vec{u},\vec{v} \sat \<\V \gets \vec{v}'\>\true$, as desired.
$\V = \vec{v}$ for some state $\vec{v}$.  Note that  
$\<\V \gets \vec{v}\>\true \in C$ iff $\vec{v} \in \C$ iff
$(M,\vec{u}) \sat \<\V \gets \vec{v}\>\true$, as desired.

If $|\V-\vec{Y}| = 1$, then $\V - \vec{Y} = \{X\}$ for some variable
$X \in \V$.
%joe18* rewrote
%If $\<\vec{Y} \gets \vec{y}\>(X=x) \in C$, then by construction,
Suppose that $\<\vec{Y} \gets \vec{y}\>(X = x) \in C$.
%joe19*: reorderd 
Then by D3, we must have $\<\vec{Y} \gets \vec{y}, X \gets x\>\true \in C$.
There are two cases: If for some $\yyp \in \R(\vec{Y})$ there
exist two values $x'$ and $x''$ in $\R(X)$  such
that both $\<\vec{Y} \gets \yyp\>(X=x') \in C$ and
$\<\vec{Y} \gets \yyp\>(X=x'') \in C$, then $F_X$ is undefined.
%joe19
%It esaily follows that $(M,\vec{u},\vec{v}) \sat
It easily follows that $(M,\vec{u}) \sat
%joe20
%\<\vec{Y} \gets \vec{y}\>(X = x)$.  On the other hand, if for all
%$\yyp \in \R(\vec{Y})$ there is at most one $x'$ such that
\<\vec{Y} \gets \vec{y}\>(X = x)$.  Otherwise, for all
$\yyp \in \R(\vec{Y})$, there is at most one $x'$ such that
%sander21:
%$\<\vec{Y} \gets \yyp\>(X=x') \in C$, then 
%$x$ is the unique value $x' \in \R(X)$ such that
$\<\vec{Y} \gets \yyp\>(X=x') \in C$, so
$x$ has to be the unique value $x' \in \R(X)$ such that
%joe21
%$\<\vec{Y} \gets \vec{y}\>(X = x') \in C$ and
$\<\vec{Y} \gets \vec{y}\>(X = x') \in C$;
%sander21: added
%joe21
%therefore 
%$F_X(\vec{u},\vec{y}) = x$ by construction.
therefore, by construction,
$F_X(\vec{u},\vec{y}) = x$.
It again follows that $(M,\vec{u}) \sat 
%joe18: expanded proof
%\<\vec{Y} \gets \vec{Y}\>(X=x)$.  Conversely, if
\<\vec{Y} \gets \vec{y}\>(X=x)$.
%joe19: moved up
%If there is another value $x' \ne x$ such that
%$\<\vec{Y} \gets \vec{y}\>(X = x') \in C$
%(in addition to $\<\vec{Y} \gets \vec{y}\>(X = x)$),
%then $F_X(\vec{u},\vec{y})$
%is undefined, and it again follows that $(M,\vec{u},\vec{v}) \sat 
%\<\vec{Y} \gets \vec{y}\>(X=x)$.
%joe18*: new argument
%Conversely, if
%$(M,\vec{u},\vec{v}) \sat \<\vec{Y} \gets \vec{y}\>(X=x)$, then we
%must have $F_X(\vec{u},\vec{y}) = x$ and the state $\<\vec{y},x\> \in
%\in \C$.   It follows that $\<\vec{Y} \gets \vec{Y}\>(X=x) \in C$ (for
%otherwise, $x$ would be chosen such that
%$\<\vec{Y} \gets \vec{y},X\gets x\>\true \notin C$, which would mean
%that  $\<\vec{y},x\> \notin \C$).

For the opposite direction, suppose that
%joe19
%$(M,\vec{u},\vec{v}) \sat \<\vec{Y} \gets \vec{y}\>(X=x)$. Then
%$(M,\vec{u},\vec{v}) \sat \<\vec{Y} \gets \vec{y}, X=x\>\true$, so
$(M,\vec{u}) \sat \<\vec{Y} \gets \vec{y}\>(X=x)$. Then
$(M,\vec{u}) \sat \<\vec{Y} \gets \vec{y}, X \gets x\>\true$, so
%joe19
%$\<\vec{Y} \gets \vec{y},X \gets\>\true \in \C$, and
$\<\vec{Y} \gets \vec{y},X \gets x\>\true \in C$ by the induction
hypothesis,  and
%joe20
%either $F_X(\vec{u},\vec{y}) = x$ or
either (1) $F_X(\vec{u},\vec{y}) = x$ or (2)
%joe19
%$F_X(\vec{u},\vec{y})$ is undefined.  In the first case, by
%joe20
%$F_X$ is undefined.  In the first case, by
%construction, we must have that $\<\vec{Y} \gets \vec{y}\>(X=x) \in
$F_X$ is undefined.  In case (1), by
construction, $\<\vec{Y} \gets \vec{y}\>(X=x) \in
%joe20
%C$.  In the second case, there must be two values $x'$
C$.  In case (2), there must be two values $x'$
and $x''$ in $\R(X)$
%joe19:
%such that $\<\vec{Y} \gets \vec{y}\>(X=x') \in C$
%$\<\vec{Y} \gets \vec{y}\>(X=x'') \in C$.  Since
and some value $\yyp \in \R(\vec{Y})$ such that
such that $\<\vec{Y} \gets \yyp\>(X=x') \in C$
and $\<\vec{Y} \gets \yyp\>(X=x'') \in C$.  Since
$\<\vec{Y} \gets \vec{y},X \gets x\>\true \in C$, by D9$'$,
$\<\vec{Y} \gets \vec{y}\>(X=x) \in C$, as desired.

The inductive step proceeds just as in \cite{Hal20}, using
D3 and D5; we omit the details here.
%joe21

%sander24
%We now prove the soundness of $\AXpd$ and $\AXpdc$. We start with
%$\AXpd$ and $\MSc$. Thus, given  
%joe24
  %  We now prove the soundness of $\AXpd$. Thus, given
    We now prove the soundness of $\AXpd$. Given 
%joe19*: I made minor changes to the argument below, but then I
%decicded to just say it's like earlier arguments.  If you think it's
%worth saying more, we can uncomment it
$M = (\S,\F,\C) \in \MSc$, we want to show that all the axioms
are valid in $M$.
The argument for D0-D5, D7, and D8
is much like that given in \cite{GallesPearl98,Hal20}; we leave the
details to the reader.  
\commentout{
suppose that $M = (\S,\F,\C) \in \MSc$.
By Theorem~\ref{thm:completeness-for-SEMs}, we know that D0-D5 and D7-D9
%joe18
%are valid in $M' = (\S,\F)$; that is,
%for each instance $\phi$ of one
%of these axioms, we have that $(M',\vec{u}) \sat \phi$.  
%that this continues to hold despite the constraints in $\C$.
%This is obviously the case for D0.  If
%$\phi$ has the form $[\vec{Y} \gets \vec{y}]\phi'$, which is the case
are valid in models where there are no constraints and all the
equations are defined.  
So if $M' = (S,\F')$ is a \emph{standard extension of $M$}, that is, a
causal model that results when we remove all constraints from 
%joe19
%$M$ and extend $\F$ to $\F'$ such that $F'_X$ is defined for all inputs,
$M$ and extend $\F$ to $\F'$ so that $F'_X$ is defined for all
variables $X$, 
then these axioms are valid.
Clearly D0 continues to be valid in $M$.   
It is also easy to see that all axioms 
$\phi$ of the form $[\vec{Y} \gets \vec{y}]\phi'$, which is the case
for D1, D2, D4, and D8,
are valid.  If 
%joe18
%%holds for all pairs $(\vec{u},\vec{v})$ that satisfy the causal
%equations in $\F'_{\vec{Y} \gets \vec{y}}$, then $\phi'$ clearly hold
%for all pairs $(\vec{u},\vec{v})$ that satisfy the causal equations 
%in $\F_{\vec{Y} \gets \vec{y}}$ such that $\vec{v} \in \C$.
$(\vec{u},\vec{v})$ is a pair that satisfies the 
equations in $\F_{\vec{Y} \gets \vec{y}}$ such that $v \in C$, then
there is a standard extension $M'$ of $M$ such that
$(\vec{u},\vec{v})$ satisfies the equations in
$\F'_{\vec{Y} \gets \vec{y}}$, so we must have $v \sat \phi'$, as
desired.  Thus, D1, D2, D4, and D8 are valid.

For D3, the soundness proof shows that for all $\vec{u}$, if there is a
$\vec{v}$  such that $(\vec{u},\vec{v})$ satisfies the equations in
%sander18:
%$\F_{\vec{Y} \gets \vec{x}}$ and $\vec{v} \sat
$\F_{\vec{X} \gets \vec{x}}$ and $\vec{v} \sat
W=w \land \phi$, then for the same choice of $\vec{v}$, we have that
$(\vec{u},\vec{v})$ satisfies the equations in
%sander18:
%$\F_{\vec{Y} \gets \vec{x}, W \gets w}$. 
$\F_{\vec{X} \gets \vec{x}, W \gets w}$. 
This continues to be the case if $\vec{v} \in \C$, so D3 is still
sound in the presence of constraints.  Essentially the same argument
shows that D5 and D7 continue to be sound in the presence of
constraints.
%joe19: \end{commentout}
}

%joe19
%For D9, observe that without constraints, D9 is sound
For D9$'$, observe that without constraints, D9 is sound
because for each $\vec{u}$, there is a unique solution $\vec{v}$ to
the equations in $\F_{\vec{Y} \gets \vec{y}}$ if $\vec{Y}$ consists
of all but one 
%sander22: 
%exogenous
endogenous variable.  With constraints, there may not be
a solution at all (so the first conjunct of D9 is not sound), 
%joe18: you're right
%but
%sander18: this is where the lack of equations shows up: there can be
%more than one solution.
%joe18*: you're quite right
%there is still at most one solution, so D9$'$ is sound.
and there may be many solutions if  
%joe19
%$F_X(\vec{u}, \vec{y})$ is undefined.
$F_X$ is undefined.
If $(M,\vec{u},\vec{v}) \sat
(\<\YY \gets \yy\>(X=x) \land
\<\YY \gets \yy\>(X=x') \land
%sander19:
%\<\YY \gets \yy,X=x''\>\true)$,
\<\YY \gets \yy,X\gets x''\>\true)$,
%sander20: 
(with $x \neq x')$,
then there are at least two solutions to the equations ($x$ and $x'$),
%joe19
%$(\vec{y},x'') \in \C$, which must be the case if
so $F_X$ must be undefined.  That means that if
%sander23:
%$(\yyp,x'') \in \C$, which must be the case if 
$(\vec{u},\yyp,x'') \in \C$, which must be the case if 
%joe23: we're looking at causal formulas; we don't need the v
%$(M,\vec{u},\vec{v}) \sat
$(M,\vec{u}) \sat
%sander19:
%\<\YY \gets \yy,X=x''\>\true$, then $(M,\vec{u},\vec{y}) \sat 
%joe19
%\<\YY \gets \yy,X\gets x''\>\true$, then $(M,\vec{u},\vec{y}) \sat 
%\<\YY \gets \yy\>(X=x'')$, as desired. 
\<\YY \gets \yyp,X\gets x''\>\true$, then $(M,\vec{u},\vec{y}) \sat 
\<\YY \gets \yyp\>(X=x'')$, as desired. 

For D9$''$, suppose that 
$(M, \vec{u}) \sat \land_{x \in \R(X)} \<\YY \gets \yy, X \gets
x\>\true$.  We want to show that
$(M, \vec{u}) \sat \<\YY \gets \yy\> \true$.  Suppose that
%sander18: This only makes sense if there's an equation for X. But the
%case in which X doesn't have an equation is easy to deal with it
%seems.
%joe18
%joe19
%$F_X(\vec{u},\vec{y})$ is defined and
$F_X$ is defined and
$F_X(\vec{u}, \vec{y}) = x$.  Let $\vec{v}$ be such that 
$\vec{v} \sat \YY = \yy \land X=x$.  Since
$(M,\vec{u}) \sat \<\YY \gets \yy, X \gets x\>\true$, and $\vec{v}$ is
the unique state such that $(\vec{u},\vec{v})$ satisfies the equations
in $\F_{\YY \gets \yy, X \gets x}$, it must be the case that
%sander23:
%$\vec{v} \in \C$ and satisfies the equations in $\F_{\YY \gets \yy}$.
$(\vec{u},\vec{v}) \in \C$ and satisfies the equations in $\F_{\YY \gets \yy}$.
Thus, $(M,\vec{u}) \sat  \<\YY \gets \yy\> \true$, as desired.
%joe18: added
%joe19
%On the other hand, if $F_X(\vec{u},\vec{y})$ is undefined,
On the other hand, if $F_X$ is undefined,
since $(M, \vec{u}) \sat \land_{x \in \R(X)} \<\YY \gets \yy, X \gets
x\>\true$, it must be the case that 
%sander23:
%$(\vec{y},x) \in \C$ for all
$(\vec{u}, \vec{y},x) \in \C$ for all
$x \in \R(X)$, and 
%sander23
%$(\vec{y},x)$ satisfies all the equations in
$(\vec{u},\vec{y},x)$ satisfies all the equations in
$\F_{\vec{Y} \gets \vec{y}}$, so
%joe19
%$(M,\vec{u},\vec{v}) \sat \bigwedge_{x \in \R(X)} \<\YY \gets \yy\>(X=x)$,
%and hence $(M,\vec{u},\vec{v}) \sat \<\YY \gets \yy\>\true$,
$(M,\vec{u}) \sat \bigwedge_{x \in \R(X)} \<\YY \gets \yy\>(X=x)$,
and hence $(M,\vec{u}) \sat \<\YY \gets \yy\>\true$,

Finally, for DSC, suppose that 
%sander23: again: where did the state go?
%joe23: And again, they're not necessary, since we're dealing with
%formulas in \L(S), so they're causal formulas.
$(M,\vec{u}) \sat [\disc(\vec{X}), \vec{Y} \gets
  \vec{y}]\phi$.  Then $(M_{-\vec{X}},\vec{u}) \sat
  [\vec{Y} \gets \vec{y}]\phi$.  So 
  %sander23:
%for all $\vec{v} \in \C$ such that
for all $(\vec{u},\vec{v}) \in \C$ such that
  $(\vec{u},\vec{v})$ satisfies the equations in
  $\F_{-\vec{X},\vec{Y} \gets \vec{y}}$, we have
  that $\vec{v} \sat \phi$.  We claim that, for all
  $\vec{x} \in \R(\vec{X})$, we have that 
$(M,\vec{u}) \sat
[\vec{X} \gets \vec{x}, \vec{Y} \gets \vec{y}]\phi$.  For suppose that
%sander23:
%$\vec{v} \in \C$ and 
$(\vec{u},\vec{v}) \in \C$ and 
  $(\vec{u},\vec{v})$  satisfies the equations in
  $\F_{\vec{X} \gets \vec{x},\vec{Y} \gets \vec{y}}$.  Then
  $(\vec{u},\vec{v})$ clearly satisfies the equations in
    $\F_{-\vec{X},\vec{Y} \gets \vec{y}}$, so $\vec{v} \sat \phi$.
The result follows.

Conversely, suppose that  $(M,\vec{u}) \sat 
\bigwedge_{\vec{x} \in \R(\vec{X})}
  [\vec{X} \gets \vec{x}, \vec{Y} \gets   \vec{y}]\phi$.
  We want to show that $(M,\vec{u}) \sat 
  [\disc(\vec{X}), \vec{Y} \gets   \vec{y}]\phi$.  Suppose that
  %sander23:
  %$\vec{v} \in \C$ and $(\vec{u}, \vec{v})$ satisfies the equations in
  $(\vec{u},\vec{v}) \in \C$ and $(\vec{u}, \vec{v})$ satisfies the equations in
  $\F_{-\vec{X},\vec{Y} \gets \vec{y}}$.  There
  must be some $\vec{x} \in \R(\vec{X})$ such that
  $\vec{v} \sat \vec{X} = \vec{x}$.  It follows that
  $(\vec{u},\vec{v})$ satisfies the equations in 
  $\F_{\vec{X} \gets \vec{x},\vec{Y} \gets \vec{y}}$. 
Since $(M,\vec{u}) \sat
  [\vec{X} \gets \vec{x}, \vec{Y} \gets \vec{y}]\phi$, we must have
  that $\vec{v} \sat \phi$.  Since this is the case for all
  %sander23:
  %$\vec{v} \in \C$ such that $(\vec{u}, \vec{v})$ satisfies the equations in
  $(\vec{u},\vec{v}) \in \C$ such that $(\vec{u}, \vec{v})$ satisfies the equations in
  $\F_{-\vec{X},\vec{Y} \gets \vec{y}}$, it follows that
  $(M,\vec{u}) \sat    [\disc(\vec{X}), \vec{Y} \gets   \vec{y}]\phi$,
  as desired.
%joe21: stop here until the next %joe21
%joe21: perhaps reinstate this paragraph (even though the second
%sentence isn't about soundness).
%sander23: How does this work now...?
%joe23: It disappears :-)
%We now consider $\AXpdc$ and $\MSCc$.  It is easy to see that CNS($\C$) is
%valid in $\MSCc$.  Moreover, we can use the same construction as above
%to prove completeness; axiom CNS($\C$) guarantees that the
%construction will result in $\C$ as the set of constraints in the model
%constructed.  We leave details to the reader.
\end{proof}
%joe20: line shaving
%This completes the argument.

\section{Discussion and Related Work}\label{sec:discussion}
We have introduced an approach for allowing 
%CH2
non-causal
constraints in causal
%joe17
%models, and shown how to define
models.
%CH2
%joe17
%actual
%causality in the presence of
%constraints.  The way we do this involves showing that we can view a
%causal model with constraints as representing a family of
%%unconstrained causal models, which we have called GOFs.§
We believe 
that our approach will have applications well beyond those that we
%joe20
%have discussed here.  We mention just
have discussed.  We mention just 
%sander11:
%three
%joe20: we've sort
%two  here that we will address in
\shortv{some of them here.}
%sander21: 
%\fullv{that we hope to address in 
%sander26
%\fullv{three that we hope to address in 
\fullv{some of them here that we hope to address in 
future work.}

%sander11: added paragraph breaks
%joe25: I played with itemizing these points, but it cost a few lines.
First, there has been recent work on
%\begin{itemize}
%\item There has been recent work on 
representing causal models at different levels of abstraction
%joe25: rewrote
%\cite{BH19,Rub17}.  Representing this using standard
%causal models requires having several separate causal models, one
%for each level of description,
\cite{BH19,Rub17}.  Representing that a (standard) causal model $M_H$
(intuitively, the high-level model) is an abstraction of $M_L$ (the
low-level model) 
%sander26: this isn't really true, the non-causal relationships are
%simply captured by the abstraction function. 
\commentout{
(indeed, this is what has been done in
all the papers on the topic), thereby ignoring the important
%joe10
%non-causal relations between the coarse-grained and fine-grained
non-causal relationships between the coarse-grained and fine-grained
variables of the distinct models. By allowing causal models with
constraints, we can model this directly.  
}%
%joe25
%and a single non-causal relationship between the models in their
%entirety (represented as the {\em abstraction function
%  $\tau$}). Models with constraints are more general, since they also
is done using an \emph{abstraction function} that relates the values of
variables in $M_L$ to those in $M_H$.
%joe25: I don't understand the point about non-causal relationships at
%all.  I don't know what it means to have a causal relationship
%between models.  As far as I'm concerned, all relations on models are
%non-causal. Rewrote
%Models with constraints are more general, since they also
%allow for local and multiple non-causal relationships. Indeed, causal
%allow for local and multiple non-causal relationships. Indeed, causal
%abstractions are easily seen to be equivalent to a special case of
%models with constraints. 
Models with constraints can easily capture abstraction.  
%sander26: I added some more detail here, because our work on
%abstraction has actually become quite popular. 
%joe25: shortened
%Concretely, given a function $\tau:\R(\V_L) \rightarrow \R(\V_H)$ such 
%that the causal model $M_H$ is a $\tau$-abstraction of causal model
Given an abstraction function $\tau:\R(\V_L) \rightarrow \R(\V_H)$, 
we can construct a model with constraints $M$ that simply
combines $M_L$ and $M_H$ (both the signatures and the equations), and
let the constraints $\C$ consist of all extended states
$(\vec{u}_L,\tau_{\U}(\vec{u}_L),\vec{v}_L,\tau(\vec{v}_L))$. 
%joe25: shortened, to save space
%Still, there remain two differences between the latter representation and the
%former

%joe25: shortened
%In the work on abstraction, although there are no constraints in the
%models, there is a set of \emph{allowed interventions}; that is, not
%all interventions 
%can be performed. The intuition for having allowed interventions is
%that certain interventions may not be meaningful, or it may not be
%possible to perform them. A causal model does not require explicitly
%specifying which interventions are allowed: disallowed interventions
%would simply come out as not having a solution. However, it seems
%useful to have a more systematic understanding of the set of
%interventions that are meaningful and will give rise to solutions. 
The work on abstraction has two features that are not directly
captured by this map.  First, they include a set of \emph{allowed
interventions}. 
Intuitively, disallowed interventions are not meaningful or cannot be
performed.  Disallowed interventions in a causal model with
constraints can be viewed as ones that do not have a solution.  
However, it seems
useful to have a more systematic understanding of the set of
interventions that are meaningful and will give rise to solutions. 
%joe25
%Another difference is that abstractions have been generalized to the
Second, abstractions have been generalized to the
{\em approximate} case, so that the solutions to the equations in both
causal models may deviate slightly from the abstraction relation
\cite{BEH19}. As such a situation seems more realistic in practice, it
would be good to generalize causal models with constraints in a
similar manner. One way of doing so would be to consider a metric
$d_{\V}(\cdot,\cdot)$ on the range of endogenous variables $\R(\V)$
and consider as solutions of the model all extended states
$(\vec{u},\vec{v})$ that are within $\alpha$ of some
$(\vec{u}',\vec{v}') \in \C$.
%joe25
%This could then be combined with a
%probability distribution over $\R(\U)$, and most of the tools for
%approximate abstraction can then be carried over to models with
%constraints.
Doing this would allow the tools for approximate abstraction to be
carried over to models with constraints.

%sander26: rewrote 
\commentout{
%joe20*: This doesn't seem that exciitng
\fullv{
Second, in the work on
abstraction, although there are no constraints in the models, there is
a set of \emph{allowed interventions}; that is, not all interventions
can be performed.  The intuition for having allowed interventions is
that certain interventions may not be meaningful, or it may not be
possible to perform them.  Considering models with constraints allows
us to deal with allowed interventions in a more principled way.
%joe7*: do we want to say more here
}}%

%sander11: new
%joe20
%sander26:
%\fullv{Third, 
%joe25
%\fullv{Second,
Second,
%\item
causal discovery algorithms are usually limited to learning
%\shortv{Second, causal discovery algorithms are usually limited to learning}
%joe21: Chris, you wrote "within a single causal model", but I'm not
%sure where you intended this to go.
a causal model using just a single dataset. There has been interesting
%joe10: not so recent ...
%work recently on generalizing such algorithms to overcome this
work on generalizing causal discovery algorithms to overcome this
limitation, meaning they can take advantage of various datasets using
%joe10
%different variables, greatly improving accuracy.
different variables, greatly improving accuracy \cite{TE14,HZGG20}.
%joe10: there's apparently a journal version of this paper; I added
%that to joe.bib
%[EDIT: add
%references: LEARNING CAUSAL STRUCTURE FROM MULTIPLE DATASETS WITH
%SIMILAR VARIABLE SETS, Tillman and Eberhardt, Causal Discovery from
%Multiple Data Sets with Non-Identical Variable Sets, Biwei Huang,1 Kun
%Zhang,1 Mingming Gong,2 Clark Glymour1]
%joe14
%Yet so far these algorithms do
%not consider non-causal relations between variables. Therefore a
This work has not yet considered non-causal relationships between variables. A
natural step to take is to modify these algorithms so that they can
exploit the constraints between variables appearing in different
%joe10
%datasets and learn a causal model with constraints rather than a GOF
%joe17
%datasets, and learn a causal model with constraints rather than a GOF
%model.
datasets, and learn a causal model with constraints.

%sander23: Maybe we should also mention GSEM's in our comparison?
%Saying that they are more expressive, but have no explicit
%representation of constraints, and thus our framework is better
%suited for representing non-causal relations?
%joe23: that seems reasonable; see below

%joe20: I'm concerned that reviewers will say "why didn't we already
%compare" 
%It is also worth comparing our approach to that of Blom et
%joe23
%Finally, it is worth examining the relative expressive power of our
%sander26
%It
Third, it
%\item It
 is worth examining the relative expressive power of our
approach and that of Blom et 
al. \cite{BBM19}.  As we said, they also allow non-causally
related variables.  They in fact allow a more general class of
%joe8
%constraints that we do: they have \emph{conditional} constraints;
%joe20
%constraints that we do: they have \emph{conditional} constraints, 
%constraints than we do: \emph{conditional} constraints, 
%sander21:
constraints,
ones that are active only under certain
%joe23*: Although we do allow disconnection, as we show (axiom DSC), we
%can get rid of the disc operator; the logic without disc as just as
%expressive as the logic with it.  So saying we have disc and they
%doesn't seem like such a big deal.  
%sander24: It's still conceptually very useful. Moreover, the
%reduction won't work for infinite-valued variables.  
interventions.   However, we allow disconnection (i.e., the $\disc()$
%joe21: line shaving 
%operation), which allows us to remove causal constraints at will.
%joe23
%operation), which allows us to remove causal constraints at will.
operation), which allows us to remove causal constraints.
%joe8
%As we saw in our examples, this plays a critical role; in particular,
%joe20
%As we saw in our examples, disconnection plays a critical role; in
\fullv{As we saw in our examples, disconnection plays a critical role; in
particular, 
as Example~\ref{geometry} shows, it allows us to specify how we want
to implement an intervention on a particular variable in a way that we
believe is quite useful in practice.}
\shortv{As Example~\ref{geometry} shows, disconnection allows us to
specify how we want 
to implement an intervention on a particular variable in a way that we
believe is quite useful in practice.}
%CH2
There is no analog of this in the framework of Blom et al. 
%joe17
%The ability to disconnect
%equations also plays an important role in defining GOF reductions,
%which is the key to our definition of actual causality in causal
%models with constraints (and, more generally, to our understanding of
%causal models with constraints as encoding a family of standard causal
%models).  Blom et al.~do not define actual causality in their
%framework.  We suspect that getting such a definition would be
%nontrivial.  That said,
It would be useful to get a deeper
understanding of the connection between the two approaches.
%joe25
%\end{itemize}

%joe23: added
We conclude with a brief comparison of causal models with constraints
to the GSEMs (generalized structural equations models) of Peters and
%joe25
%Halpern \citeyear{PH21}.
Halpern \cite{PH21}.  
%sander26: This isn't entirely true, because the disc can only be
%reduced in the finite case. Perhaps it would be good to make this
%explicit, because it strengthens the case for models with
%constraints.  
GSEMs are more expressive than causal models
%joe25
%with constraints; they can simply express the effect of an
with constraints (at least, if all variables have finite range); they
can simply express the effect of an 
intervention in a given context directly, by having a function
$\vec{F}$ that takes as input a context $\vec{u}$ and an intervention $I$, and
returns a set of states (intuitively, the set of states that might
result by performing intervention $I$ in context $\vec{u}$).
Thus, given a causal model with constraints $M$, we can define a GSEM
$M'$ that agrees with $M$ on all formulas in $\LS$ (which suffices,
given that we can replace all occurrences of the $\disc$ 
%joe25
%operator using the DSC axiom).
operator using the DSC axiom if all variables have finite range).
However, causal models with constraints allow us to describe constraints directly, 
%sander26:
%which seems more natural for many applications.
%joe25
%which makes them more practical to use for many applications. 
which makes them more practical for many applications. 
%Saying that they are more expressive, but have no explicit
%representation of constraints, and thus our framework is better
%suited for representing non-causal relations?

%joe17*: say something about equivalence and why it's important

% Acknowledgments---Will not appear in anonymized version
\section*{Acknowledgments}

Halpern was supported in part by NSF grants IIS-178108 and IIS-1703846 and MURI grant
W911NF-19-1-0217 and ARO grant W911NF-22-1-0061. Sander Beckers was supported by the German Research Foundation (DFG)
under Germany’s Excellence Strategy – EXC number 2064/1 – Project number 390727645, and by the Alexander von Humboldt Foundation.

\bibliographystyle{acm}
\bibliography{joe}

\end{document}